\documentclass[accepted]{uai2022} 

\usepackage[american]{babel}


\usepackage{natbib} 
    \bibliographystyle{plainnat}
    
\usepackage{mathtools} 
\usepackage{booktabs} 
\usepackage{tikz} 



\usepackage[utf8]{inputenc} 
\usepackage[T1]{fontenc}    
\usepackage{hyperref}       
\usepackage{url}            
\usepackage{booktabs}       
\usepackage{amsfonts}       
\usepackage{nicefrac}       
\usepackage{microtype}      
\usepackage{amsmath}
\usepackage{amssymb}
\usepackage{mathrsfs}
\usepackage{graphicx}
\usepackage{algorithm}
\usepackage{algorithmic}
\usepackage{amsthm}
\usepackage{subcaption}
\usepackage{xcolor}
\usepackage[toc,page]{appendix}
\usepackage{enumitem}

\newtheorem{theorem}{Theorem}

\newtheorem{lemma}{Lemma}
\newtheorem{definition}{Definition}

\usepackage[textwidth=2.0cm, textsize=tiny]{todonotes} 


\hypersetup{
     colorlinks = true,
     linkcolor = blue,
     anchorcolor = blue,
     citecolor = blue,
     filecolor = blue,
     urlcolor = blue
     }

\title{Multi-source Domain Adaptation via Weighted Joint Distributions Optimal Transport}


%
%
\author[1]{\href{mailto:<rosanna.turrisi@edu.unige.it>?Subject=Your UAI 2022 paper}{Rosanna Turrisi}{}}
\author[2]{ Rémi Flamary}
\author[3]{Alain Rakotomamonjy}
\author[4,5]{Massimiliano Pontil}

\affil[1]{%
    DIBRIS, MaLGa, University of Genova, Genoa; CTSNC, Istituto Italiano di Tecnologia, Ferrara, Italy}
\affil[2]{%
    CMAP, École Polytechnique, Institut Polytechnique de Paris
}
\affil[3]{%
    Criteo AI Lab, Paris}
\affil[4]{%
    CSML, Istituto Italiano di Tecnologia, Genoa, Italy}
\affil[5]{%
    Department of Computer Science, University College London, U.K.}
  \begin{document}
\maketitle

\begin{abstract}
This work addresses the problem of domain adaptation on an unlabeled \textit{target} dataset using knowledge from multiple labelled \textit{source} datasets. Most current approaches tackle this problem by searching for an embedding that is invariant across source and target domains, which corresponds to searching for a universal
classifier that works well on all domains.
In this paper, we address this problem from a new perspective: instead of crushing diversity of the \textit{source} distributions, we exploit it to adapt better to the \textit{target} distribution.
Our method, named Multi-Source Domain Adaptation via Weighted Joint Distribution Optimal Transport (MSDA-WJDOT), aims at finding simultaneously an Optimal Transport-based alignment between the \textit{source} and \textit{target} distributions and a
re-weighting of the \textit{sources} distributions.
We discuss the theoretical aspects of the method and propose a conceptually simple algorithm. Numerical experiments indicate that the proposed method achieves
state-of-the-art performance on simulated and real datasets.


\end{abstract}

\section{Introduction}

Many machine learning algorithms assume that the test and training datasets are sampled from the same distribution. However, in many real-world applications, new data can exhibit a distribution change (\textit{domain shift}) that degrades the algorithm performance. This shift can be observed for instance in computer vision when changing background, location, illumination or pose of the test images, or in speech recognition when the recording conditions or speaker accents are varying.
To overcome this problem, Domain Adaptation (DA) \citep{jiang2008literature, kouw2019review} attempts to leverage labelled data from a \textit{source}
domain, in order to learn a classifier for unseen or unlabelled data in a \textit{target} domain. 

Several DA methods incorporate a distribution discrepancy loss into a neural network to overcome the domain gap. The distances between distributions are usually measured through 
an adversarial loss  \citep{ganin2016domain,Ghifary2016, Tzeng2015, Tzeng2017} or 
integral probability metrics, such as the maximum mean discrepancy \citep{Mingsheng2016, Tzeng2014}. DA techniques based on Optimal Transport have been proposed by \citep{Courty2015, Courty2017, Damodaran2018} and justified theoretically by \cite{Redko2017}. 

In this work, we focus on the setting, more common in practice, in which  several labelled \textit{sources} are available, denoted in the following as multi-source domain adaptation (MSDA) problem.
Many recent approaches motivated by theoretical considerations 
have been proposed for this problem. For instance, \cite{mansour2009domain,hoffman2018algorithms} provided theoretical guarantees on how several \textit{source} predictors can be combined using proxy measures, such as the accuracy of a hypothesis. This approach can achieve a low error predictor on the \textit{target}  domain, under the assumption that the \textit{target}  distribution can be written as a convex combination of the \textit{source} distributions. 

Other MSDA methods \citep{Peng2018, Zhao2018,Wen2019} look for a single hypothesis that minimizes the convex combination of its error on all \textit{source} domains and they provide theoretical bounds of the error of the obtained hypothesis on the \textit{target} domain. Those guarantees generally involve some terms depending on the distance between each \textit{source} distribution and the \textit{target}  distribution and suggest to find an embedding in which the feature distributions between \textit{sources} and \textit{target}  are as close as possible, by using Adversarial Learning \citep{Zhao2018,Xu2018,Lin2020} or Moment Matching \citep{Peng2018}. 
However, it may not be possible to find an embedding preserving discrimination even when the distances between \textit{source} and \textit{target} marginals are small. One such example is given in Figure \ref{fig:visu_method}, in which a rotation between the \textit{sources} prevents the existence of such invariant embedding as theorized by \cite{pmlr-v97-zhao19a}. At last, we mention the very recent line of works on MSDA considering approaches inspired from imitation learning \cite{nguyen2021most,nguyen2021stem} and the work by \cite{montesuma2021} building on Wasserstein barycenters.



\noindent {\bf Contributions~} In this paper, we address the MSDA problem following a radically different route {than the usual
approach consisting in looking for a latent representation in which all \textit{source} distributions are similar to the \textit{target}}.
The approach we advocate embraces the diversity of \textit{source} distributions
and look for a convex combination of the joint \textit{source} distributions with minimal  Wasserstein distance to an estimated \textit{target} distribution, without relying on a proxy measure such as the accuracy of \textit{source} predictors.\\
%
%
We support this novel conceptual approach by deriving a generalization bound on the \textit{target} error.
Our algorithm consists  
in optimizing {a key term} in this generalization bound, given 
by the Wasserstein distance  between the estimated joint \textit{target} distribution and a weighted sum of the joint \textit{source} distributions. 
 One unique feature of our approach is that the weights of the \textit{source} distribution are learned simultaneously with the classification function, which allows us to distribute the mass based on the similarity of the 
\textit{sources} with the \textit{target}, both in the feature and in the output spaces. As such, our model can also handle problems in which only target shift occurs. 
Interestingly the estimated
 weights provide a measure of domain relatedness and interpretability. We refer to the proposed method as Multi-Source Domain Adaptation via Weighted Joint Distribution Optimal Transport (MSDA-WJDOT).


\noindent {\bf Notations~} Let $g:\mathcal{X}\rightarrow \mathcal{G}$ be a differentiable embedding function, with  $\mathcal{G}$ the embedding space. Throughout the paper all input distributions are in this  embedding space. 
We let $p_S$ and $p_T$ be the true joint distributions in the \textit{source} and \textit{target} domains, respectively. Both distributions are supported on the product space $\mathcal{G}\times \mathcal{Y}$, where $\mathcal{Y}$ is the label space. 
In practice we only have access to a finite number $N_S$ of samples in the \textit{source} domain leading to the empirical \textit{source} distribution
$\hat p_S=\frac{1}{N_S}\sum_{i=1}^{N_S}\delta_{g(x_S^i),y_S^i}$
where $\delta$ is the Dirac function. In the \textit{target} domain, only a finite number of unlabeled samples $N_{T}$ in the feature space is available. 
We then denote with $\Delta^J:=\{\pmb{\alpha}\in [0,1]^{J}| \sum_{i=1}^{J} \alpha_i = 1\}$  the $(J-1)$-dimensional simplex. Finally, given a loss function $L$ and a joint distribution $p$, the expected loss of a function $f$ is defined as $\varepsilon_{p}(f)=\mathbb{E}_{(x,y)\sim p} [L(y,f(x))]$.

\section{Optimal Transport and DA}\label{sec:RelatedWork}
In this section we first
recall the Optimal Transport problem and the notion of Wasserstein distance. 
Then we discuss how they were exploited for domain adaptation (DA) in the Joint Distribution Optimal Transport (JDOT) formulation that will be central in our approach.

\paragraph{Optimal Transport}
The Optimal transport (OT) problem has been originally introduced by 
\cite{Monge} and, reformulated  as a relaxation by 
\cite{Kantorovich}.
Let $\hat\mu _{S}=\sum_{i} a^S_i\delta_{x_S^i}$, $\hat\mu _{T}=\sum_{i} a^T_i\delta_{x_T^i}$ be discrete probability measures with $\pmb{a^S},\pmb{a^T} \in\Delta^J$. 
The OT problem searches a transport plan $\pi\in \Pi(\hat\mu_{S}, \hat\mu_{T})$, where
$$\Pi(\hat\mu_{S}, \hat\mu_{T}) := \Big\{\pi\geq 0 ~\big|~ \sum_{i=1}^J \pi_{i,j}=a^T_j, \sum_{j=1}^J \pi_{i,j}=a^S_i\Big\}, 
$$
{that is, the set of 
joint probabilities} with marginals $\mu_{1}\text{ and }\mu_{2}$, that solve the following problem:
\begin{equation}\label{eq:OT}
W_{C}(\hat\mu_{S}, \hat\mu_{T}) =  \operatorname*{min}_{\pi\in \Pi(\hat\mu_{S}, \hat\mu_{T})} \sum _{i,j=1}^J C_{i,j} \cdot \pi _{i,j}
\end{equation}
where $C_{i,j} = c(x_{S}^{i}, x_{T}^{j})$ represents the cost of transporting mass between $x_S^i$ and $x_T^j$  for a given ground cost function $c:\mathcal{X}\times \mathcal{X}\rightarrow \mathbb{R}_{+}$. It is often chosen to be the Euclidean distance, recovering the classical $W_1$ Wasserstein distance. Given a ground cost $C$, $W_{C}(\hat\mu_{S}, \hat\mu_{T})$ corresponds to the minimal cost for mapping one distribution to the other and $\pi^\star$ is the OT matrix describing the relations between \textit{source} and \textit{target}  samples. OT and in particular Wasserstein distance have been used with success in numerous machine learning applications such as Generative Adversarial Modeling \citep{arjovsky2017wasserstein,genevay2018learning} and DA \citep{Courty2015,Courty2017,shen2018wasserstein}.

\paragraph{Joint Distribution Optimal Transport (JDOT)} This method has {been proposed} by \cite{Courty2017} to address the problem of unsupervised DA with only one joint \textit{source} distribution $\hat p_S$ and the feature marginal \textit{target}  distribution $\hat\mu _{T}$. 
Since no labels are available in the \textit{target} domain, the authors proposed to use a proxy joint empirical distribution $\hat p_{T}^f$ whereby labels are replaced by the prediction of a classifier
$f:\mathcal{G}\rightarrow\mathcal{Y}$, that is
\begin{equation}
    \hat p_{T}^f=\frac{1}{N_{T}} \sum\limits_{i=1}^{N_{T}} \delta _{g(x_{T}^{i}), f(g(x_{T}^{i}))}.
    \label{eq:proxy}
\end{equation}
In order to use a joint distribution in the Wasserstein distance, they defined, for $z,z'\in {\cal G}$ and $y,y' \in {\cal Y}$, the cost
\[
D(z,y;z',y') = \beta \|z-z'\|^2 + {L}(y,y')
\]
where ${L}$ is a loss between classes and $\beta$ weights the strength of feature loss. This cost takes into account embedding and label discrepancy. To train a meaningful classifier on the \textit{target} domain, 
\cite{Courty2017} solved the  problem
 \begin{equation}\label{eq:JDOT}
 \operatorname*{min}_{ f}  W_D(\hat p_S,\hat p_T^f) 
\end{equation}
where the minimization is over a suitable set of classifiers and the objective 
$W_D(\hat p_S,\hat p_T^f)$ is a Wasserstein distance between the joint \textit{source} and joint ``predicted'' \textit{target}, 
\[
\operatorname*{min}_{\pi\in \Pi(\hat p_{S}, \hat p_T^f)}\sum_{i,j=1}^J
D(g(x_{S}^{i}), y_{S}^{i};g(x_T^{j}), f(g(x_T^{j}))) \cdot \pi _{i,j}. \]

JDOT has been supported by generalization error guarantees, 
\citep[see][for a discussion]{Courty2017}.
It was later extended to deep learning framework where the embedding $g$ was estimated simultaneously with the classifier $f$, via an efficient stochastic optimization procedure in \citep{Damodaran2018}. 
A key aspect of JDOT, that was overlooked 
by the domain adaptation community,
is the fact that the optimization problem {involves} the joint embedding/label distribution. This is in contrast to a large majority of DA approaches \citep{ganin2016domain,sun2016deep,shen2018wasserstein} using divergences only on the marginal distributions, whereas using simultaneously feature and labels information is the basis of most generalization bounds as discussed in the next section.




\section{Multi-source DA via Weighted Joint Optimal Transport}\label{sec:WJDOT}
We now discuss our MSDA approach. 
We assume to have $J$ \textit{sources} with joint distributions $p_{S,j}$, for $1\leq j\leq J$. 
We define a convex combination of the \textit{source} distributions
\begin{equation}
    p_{S}^{\alpha} = \sum _{j=1}^{J} \alpha _{j}p_{S,j}
\end{equation}with $\pmb{\alpha}\in\Delta^{J}$ and we present a novel generalization bound for MSDA problem that depends on $p_S^{\alpha}$. 
Then, we introduce the MSDA-WJDOT optimization problem and propose an algorithm to solve it. Finally, we discuss the relation between MSDA-WJDOT and other MSDA approaches.

\subsection{Generalization Bound}

The theoretical limits of DA are well studied and well understood since the work of \cite{ben2010impossibility} that provided an "impossibility theorem" showing that, if the \textit{target}  distribution is too different from the \textit{source} distribution, adaptation is not possible. However in the case of MSDA, 
one can exploit the diversity of the \textit{source} domains and use only the \textit{sources} close to the \textit{target}  distribution, thereby 
obtaining a better generalization bound.
For this purpose, a relevant assumption, already considered in \cite{mansour2009domain}, is that the \textit{target}  distribution is a convex combination of the \textit{source} distributions. 
The soundness of such an approach is illustrated by the following lemma.
\begin{lemma}\label{lemma1}
For any hypothesis $f \in \mathcal{H}$, denote by $\varepsilon_{p_{T}}(f)$ and $\varepsilon_{p_S^{\boldsymbol{\alpha}}}(f)$, the expected loss of $f$ on the \textit{target}  distribution and on the weighted sum of the \textit{source} distributions, with respect to a loss function $L$ bounded by $B$. Then
\begin{equation}\label{eq:bound_tv}\textstyle
    \varepsilon_{p_{T}}(f) \leq \varepsilon_{p_S^{\boldsymbol{\alpha}}}(f) + B\cdot D_{TV}\left(p_S^{\boldsymbol{\alpha}},p_T\right)
\end{equation}
where $D_{TV}$ is the total variation distance.
\end{lemma}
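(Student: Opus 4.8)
The plan is to reduce the entire statement to a single estimate on the integral of a bounded function against the signed measure $p_T - p_S^{\boldsymbol{\alpha}}$. Using the definition $\varepsilon_p(f)=\mathbb{E}_{(x,y)\sim p}[L(y,f(x))]$, the difference of the two errors is
\begin{equation*}
\varepsilon_{p_T}(f) - \varepsilon_{p_S^{\boldsymbol{\alpha}}}(f) = \int L(y,f(x))\, d\big(p_T - p_S^{\boldsymbol{\alpha}}\big)(x,y),
\end{equation*}
so it suffices to bound the right-hand side by $B\cdot D_{TV}(p_S^{\boldsymbol{\alpha}},p_T)$. The lemma is thus nothing more than the statement that integrating a bounded loss against the difference of two distributions is controlled by their total variation distance; note that $p_S^{\boldsymbol{\alpha}}=\sum_j\alpha_j p_{S,j}$ is a genuine probability measure precisely because $\boldsymbol{\alpha}\in\Delta^J$, so the convex-combination structure enters only through this fact.

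First I would fix $f$ and set $h(x,y):=L(y,f(x))$, recording that the boundedness hypothesis means $0\le h\le B$ pointwise. The heart of the argument is the estimate that for any $h$ with range in $[0,B]$ and any probability measures $p,q$,
\begin{equation*}
\left|\int h\, d(p-q)\right| \le B\cdot D_{TV}(p,q).
\end{equation*}
To obtain the \emph{sharp} constant $B$ rather than the $2B$ that a crude $\lVert h\rVert_\infty$ bound would produce, I would use the layer-cake representation $h(x,y)=\int_0^B \mathbf{1}[h(x,y)>t]\,dt$ together with Fubini, rewriting the integral as $\int_0^B (p-q)(\{h>t\})\,dt$. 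Each term satisfies $|(p-q)(\{h>t\})|\le \sup_A|p(A)-q(A)| = D_{TV}(p,q)$, and integrating over $t\in[0,B]$ gives exactly the factor $B$. Equivalently, one can invoke the variational characterization of total variation on functions whose range lies in an interval of length $B$.

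Applying this with $p=p_T$, $q=p_S^{\boldsymbol{\alpha}}$ and $h(x,y)=L(y,f(x))$, and using the symmetry $D_{TV}(p_T,p_S^{\boldsymbol{\alpha}})=D_{TV}(p_S^{\boldsymbol{\alpha}},p_T)$, yields $\varepsilon_{p_T}(f)-\varepsilon_{p_S^{\boldsymbol{\alpha}}}(f)\le B\cdot D_{TV}(p_S^{\boldsymbol{\alpha}},p_T)$, which is the claimed inequality after rearranging. The only genuinely delicate point is pinning down the constant: the clean factor $B$ crucially requires \emph{nonnegativity} of the loss (range $[0,B]$, not $[-B,B]$), since a loss taking both signs could achieve $2B\cdot D_{TV}$ on the Hahn decomposition of $p_T-p_S^{\boldsymbol{\alpha}}$. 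Everything else is a direct unwinding of definitions, with the mild caveat that the total-variation convention (the $\sup_A$ form, equal to $\tfrac12\int|dp-dq|$) must match the one implicit in the statement for the constant to come out as $B$.
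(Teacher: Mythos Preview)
Your proof is correct and follows essentially the same route as the paper: write $\varepsilon_{p_T}(f)-\varepsilon_{p_S^{\boldsymbol{\alpha}}}(f)=\int L(y,f(x))\,d(p_T-p_S^{\boldsymbol{\alpha}})$ and bound this by the total variation distance using $|L|\le B$. The only minor deviation is that the paper works with the $L^1$ convention $D_{TV}(p,q)=\int|p-q|$ and bounds directly, whereas you adopt the $\sup_A$ convention and use a layer-cake argument to secure the constant $B$; your closing caveat about matching conventions is precisely the point where the two presentations differ.
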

This simple inequality, whose proof is presented in the appendix, tells us that the key point
for \textit{target}  generalization is to have a function $f$ with low error on a
combination of the joint \textit{source} distributions  and that combination should be "near" to
the \textit{target}  distribution.
Note that this also holds for single \textit{source} DA problem corroborating the recent findings that just matching marginal distributions may not be sufficient \citep{pmlr-v97-wu19f}. While the above lemma provides a simple and principled guidance for a multi-source
DA algorithm, it cannot be used for training since 
it assumes that labels in the \textit{target}  domain are known. In the following, we provide 
a generalization bound in a realistic scenario where no \textit{target}  labels are available  and a self-labelling strategy is employed to compensate for the missing labels.

Taking inspiration from the result in Lemma \ref{lemma1}, we propose a theoretically grounded framework for learning from multiple \textit{sources}. To this end, we first recall the notion of Probabilistic Transfer Lipschitzness (PTL) of a classifier \cite{Courty2017}, that will be used in our method. 
\begin{definition}{{\em (PTL Property)}} 
Let $D$ be a metric on $\mathcal{G}$ and let $\phi : \mathbb{R} \rightarrow [0,1]$. A labeling function $f : \mathcal{G} \rightarrow \mathbb{R}$ and a joint distribution $\pi\in\Pi(\mu_S,\mu_T)$ 
are $\phi$-Lipschitz transferable if for all $\lambda > 0$, we have
$$
{\rm Prob}_{(x_S,x_T)\sim \pi}\big [|f(x_S) - f(x_T) | > \lambda D(x_S,x_{T}) \big ] \le \phi(\lambda).
$$
\label{def1}
\end{definition}
The PTL property is a reasonable assumption for DA that was introduced in \cite{Courty2017} and provides a bound on the probability of finding pair of \textit{source}-\textit{target}  samples of different label within a $1/\lambda$-ball. 

Our approach is based on the idea that one can compensate the lack of \textit{target}  labels by using an hypothesis labelling function $f$ which provides a joint distribution $p_{T}^f$ in \eqref{eq:proxy}, where $f$ is searched in order to align $p_{T}^f$ with a weighted combination of \textit{source} distributions $p_S^\alpha$.
Following this idea, 
we introduce the definition of similarity measure and a new generalization bound for MSDA. %

\begin{definition}{{\em (Similarity measure)}} 
Let $\mathcal{H}$ be a space of $M$-Lipschitz labelling functions. Assume 
that, for every $f \in \mathcal{H}$ and $x,x' \in {\cal G}$, $|f(x) - f(x^\prime)| \leq M$.
Consider the following measure of similarity between $p_{S}^{\boldsymbol{\alpha}}$ and $p_T$ introduced in \cite[Def. 5]{ben2010impossibility}
\begin{equation}
\Lambda(p_S^{\boldsymbol{\alpha}},p_T)=\min_{f\in\mathcal{H}} \varepsilon_{p_S^{\boldsymbol{\alpha}}}(f) + \varepsilon_{p_T}(f),\label{eq:lambda}
\end{equation}
where the risk is measured w.r.t. to a symmetric and $k$-Lipschitz loss function that satisfies the triangle inequality.
\label{defLambda}
\end{definition}

\begin{theorem}\label{thm:gen2}

Let $\mathcal{H}$ be the space introduced in Definition \ref{defLambda}
 and assume that the function $f^*$ minimizing Eq. \ref{eq:lambda} satisfies the PTL property (Definition \ref{def1}).
Let $\hat p_{S,j}$ be $j$-th source empirical distributions of $N_j$ samples and $\hat p_T$ the empirical target distribution with $N_T$ samples. Then for all $\lambda>0$ , with $\beta=\lambda k$ in the ground metric $D$, we have with probability at least  $1-\eta$ that
\begin{equation}
\nonumber
\scriptstyle
\begin{split}
    \varepsilon_{p_T}(f) \leq &W_D\left(\hat p_S^{\boldsymbol{\alpha}}, \hat p_T^f\right)+\sqrt{\frac{2}{c'}\log \frac{2}{\eta}}\left(\frac{1}{N_T}+\sum_{j=1}^J\frac{\alpha_j}{ N_j}\right)\\
&+ \Lambda(p_S^{\boldsymbol{\alpha}},p_T)
 + kM \phi(\lambda).
 \end{split}
\end{equation}
\end{theorem}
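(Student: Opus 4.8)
The plan is to mirror the single-source JDOT generalization argument of \cite{Courty2017}, replacing the lone source joint distribution by the convex combination $p_S^{\boldsymbol{\alpha}}$, and then add a concentration step that accounts for the mixture structure of the empirical source. I would split the proof into a population-level bound, in which the true distributions $p_S^{\boldsymbol{\alpha}}$ and $p_T^f$ appear, followed by a sampling step passing to the empirical $\hat p_S^{\boldsymbol{\alpha}}$ and $\hat p_T^f$.

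For the population bound, let $f^*$ be the minimizer in \eqref{eq:lambda}, so that $\varepsilon_{p_S^{\boldsymbol{\alpha}}}(f^*)+\varepsilon_{p_T}(f^*)=\Lambda(p_S^{\boldsymbol{\alpha}},p_T)$. Since $L$ is symmetric and satisfies the triangle inequality, I would first peel off $f^*$ on the target, $\varepsilon_{p_T}(f)\le \varepsilon_{p_T}(f^*)+\mathbb{E}_{x\sim\mu_T}[L(f^*(x),f(x))]$, reducing the problem to controlling the marginal term $\mathbb{E}_{x\sim\mu_T}[L(f^*(x),f(x))]$. Let $\pi$ be the optimal coupling realizing $W_D(p_S^{\boldsymbol{\alpha}},p_T^f)$, whose source marginal is $p_S^{\boldsymbol{\alpha}}$ (carrying true labels $y_S$) and whose target marginal is $p_T^f$ (carrying labels $f(x_T)$). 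Applying the triangle inequality along the chain $f^*(x_T)\to f^*(x_S)\to y_S\to f(x_T)$ and integrating against $\pi$ produces three contributions: $\mathbb{E}_\pi[L(f^*(x_S),y_S)]=\varepsilon_{p_S^{\boldsymbol{\alpha}}}(f^*)$, which together with $\varepsilon_{p_T}(f^*)$ completes $\Lambda(p_S^{\boldsymbol{\alpha}},p_T)$; the label term $\mathbb{E}_\pi[L(y_S,f(x_T))]$, which is bounded by $W_D(p_S^{\boldsymbol{\alpha}},p_T^f)$ because it is a nonnegative summand of the ground cost $D$; and the transfer term $\mathbb{E}_\pi[L(f^*(x_S),f^*(x_T))]$.

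The transfer term is where the PTL hypothesis enters. Using that $L$ is $k$-Lipschitz, $L(f^*(x_S),f^*(x_T))\le k\,|f^*(x_S)-f^*(x_T)|$. I would split the expectation on the PTL event: on its complement, which has probability at most $\phi(\lambda)$ under $\pi$, I bound $|f^*(x_S)-f^*(x_T)|\le M$, producing $kM\phi(\lambda)$; on the event itself I use $|f^*(x_S)-f^*(x_T)|\le \lambda D(x_S,x_T)$, so that with the choice $\beta=\lambda k$ the factor $k\lambda$ matches the coefficient of the feature part of the ground cost and this term is absorbed into $W_D(p_S^{\boldsymbol{\alpha}},p_T^f)$. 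Collecting the three contributions gives the population bound $\varepsilon_{p_T}(f)\le W_D(p_S^{\boldsymbol{\alpha}},p_T^f)+\Lambda(p_S^{\boldsymbol{\alpha}},p_T)+kM\phi(\lambda)$.

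Finally, to obtain the empirical version I would insert the empirical measures via the triangle inequality for the Wasserstein distance, $W_D(p_S^{\boldsymbol{\alpha}},p_T^f)\le W_D(p_S^{\boldsymbol{\alpha}},\hat p_S^{\boldsymbol{\alpha}})+W_D(\hat p_S^{\boldsymbol{\alpha}},\hat p_T^f)+W_D(\hat p_T^f,p_T^f)$, and control the two outer terms by a concentration inequality for the empirical Wasserstein distance. Here I would use the joint convexity of the Wasserstein distance, $W_D(p_S^{\boldsymbol{\alpha}},\hat p_S^{\boldsymbol{\alpha}})\le\sum_{j=1}^J\alpha_j\,W_D(p_{S,j},\hat p_{S,j})$, and then apply the per-domain concentration bound with $N_j$ and $N_T$ samples together with a union bound, which produces the weighted sampling term $\sqrt{\tfrac{2}{c'}\log\tfrac{2}{\eta}}\big(\tfrac{1}{N_T}+\sum_{j}\tfrac{\alpha_j}{N_j}\big)$. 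I expect the main obstacle to lie in this last step: extracting the $\alpha_j$-weighted dependence on the per-source sample sizes cleanly from the convexity and union bounds over the $J+1$ empirical measures, and, more delicately, reconciling the feature summand of the ground cost $D$ with the linear PTL bound $\lambda D(x_S,x_T)$, so that the identification $\beta=\lambda k$ actually lets the PTL feature contribution be absorbed into $W_D$ without an unwanted power mismatch between $\|z-z'\|$ and $\|z-z'\|^2$.
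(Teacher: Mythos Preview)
Your proposal is correct and follows essentially the same route as the paper: a population-level bound obtained by a triangle-inequality chain under the optimal coupling (the paper phrases this as bounding $|\varepsilon_{p_T^f}(f^*)-\varepsilon_{p_S^{\boldsymbol{\alpha}}}(f^*)|$ via a Kantorovich--Rubinstein step, but the resulting three terms coincide with yours), then the triangle inequality for $W_D$, the convexity bound $W_D(p_S^{\boldsymbol{\alpha}},\hat p_S^{\boldsymbol{\alpha}})\le\sum_j\alpha_j W_D(p_{S,j},\hat p_{S,j})$, and the Bolley et al.\ concentration. Your concern about the $\|z-z'\|$ versus $\|z-z'\|^2$ mismatch is well placed: the paper's proof simply writes the feature part of the ground cost as a generic metric $D(x_S,x_T)$ and does not resolve this either.
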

Note that the quantity $\Lambda(p_S^{\boldsymbol{\alpha}},p_T)$ in the bound 
measures the discrepancy between the true \textit{target}  distribution and the "best" combination of the \textit{source} distributions and, 
similarly to some terms in the DA bounds of \cite{ben2010impossibility}, it is not directly controllable. However, we have experimentally checked that our approach
minimizes an upper bound of this term $\Lambda$ -- see discussion in Section \ref{sec:experiment} and Figure \ref{fig:lambda} in the appendix. 
Interestingly the $1/N_j$ ratios in the bound are weighted by $\alpha_j$ which means that even if one \textit{source} is poorly sampled it won't have a large impact as soon as the coefficient $\alpha_j$ stays small. 
This suggests to investigate some kind of regularization for the weights ${\boldsymbol{\alpha}}$ but since it would introduce one more hyperparameter we left it to future works and in the following focus only on optimizing the first term of the bound.

\subsection{MSDA-WJDOT Problem}

\paragraph{MSDA-WJDOT Optimization Problem} 

\begin{figure*}[ht]
\begin{minipage}[b]{1.0\linewidth}
    \centering
    \includegraphics[width=.85\linewidth]{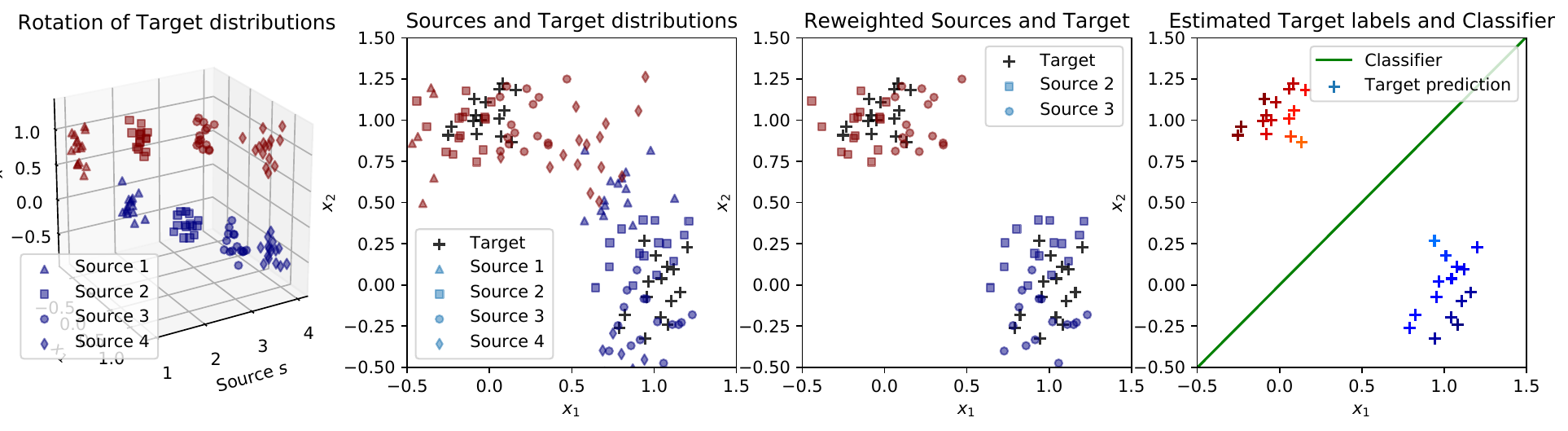}
    
    \end{minipage}
\vspace{-.5truecm}
\caption{2D simulated data. (Left) illustration of 4 \textit{source} distributions corresponding to 4 increasing rotations. The color of the sample corresponds to the class. (Center Left) \textit{source} distributions and \textit{target}  distribution in black because no class information is available. (Center Right) \textit{source} distributions weighted by the optimal $\pmb{\alpha}^\star=[0,0.5,0.5,0]$ from MSDA-WJDOT: only Source 2 and 3 have a weight $>0$ because they are the closest to the \textit{target} in the Wasserstein sense. (Right) Final MSDA-WJDOT \textit{target} classification.}
    \label{fig:visu_method}
\end{figure*}


Our approach aims at finding a function $f$ that aligns the distribution $p_T^{f}$ with a convex combination  $\sum _{j=1}^{J} \alpha _{j} p_{S,j}$ of the \textit{source} distributions with convex weights $\pmb{\alpha}\in\Delta^J$ on the simplex.
We express the multi-domain adaptation problem as
\begin{equation}\label{eq:DA1}
 \operatorname*{min}_{\pmb{\alpha}, f} \quad
W_D\left(\hat p_T^{f}, \sum _{j=1}^{J} \alpha _{j} \hat p_{S,j}\right).
\end{equation}
 Problem above is a minimization of the first term in the bound from Theorem \ref{thm:gen2} 
 with respect to both $f$ and $\pmb{\alpha}$. 
 The role of the weight $\pmb{\alpha}$ is crucial because it allows in practice to select (when $\pmb{\alpha}$ is sparse) the \textit{source} distributions that are the closest in the Wasserstein sense and use only those distributions to transfer label knowledge from. 
 An example of the method 
 is provided in Figure \ref{fig:visu_method} showing 4 \textit{source} distributions in 2D obtained from rotation in the 2D space. 
  One interesting property of our approach is that it can adapt to a lot of variability in the \textit{source} distributions as long as the distributions lie in a distribution manifold and this manifold is sampled correctly by the \textit{source} distributions. 
  For instance the linear weights allow to interpolate between \textit{source} distributions and recover the weighted \textit{source} that is the closest to the manifold of distribution, hence providing a tightest generalization as shown in the previous section.




\paragraph{Optimization Algorithm}  Problem \eqref{eq:DA1} can be solved with a block coordinate descent similarly to what was proposed in \cite{Courty2017}. But with the introduction of the weights $\pmb{\alpha}$ we numerically observed that one can easily get stuck in a local minimum with poor performances. So we proposed the optimization approach in Algorithm \ref{alg:wjdot}, that is an alternated projected gradient descent \textit{w.r.t.} the parameters $\pmb{\theta}$ of the classifier  $f_{\pmb{\theta}}$ and the weights $\pmb{\alpha}$ of the sources. Note that the sub-gradient of $\nabla_{\pmb{\theta}}W$ is computed by solving the OT problem and using the fixed OT matrix to compute the gradient similarly to \cite{Damodaran2018}. 
It is well known that the subgradient \emph{w.r.t.} the weights of a distribution can be expressed as $\nabla_{\pmb{w}} W(\mu,\sum_{i=1}^J w_i\delta_{x_i})=\pmb{\beta}$ where $\pmb{\beta}$ is the optimal right dual variable of the problem. Moreover, the sub-gradient $\nabla_{\pmb{\alpha}}W$ can be computed in closed form as
$$\nabla_{\alpha_j} W_D\left(\hat p_T^{f}, \sum _{j=1}^{J} \frac{\alpha _{j}}{N_j} \sum_{i=1}^{N_j} \delta_{(g(x_{j}^{i}),y_{j}^{i})}\right) = N_j\sum_{i=1}^{N_j} \beta^*_{j,i} $$
where $\beta^*_{j,i}$ is the dual variable for sample $i$ in source domain $j$.
The definition of the projection to the simplex $P_{\Theta}$ is provided in supplementary materials. Also note that while we did not need it in the numerical experiments, Algorithm \ref{alg:wjdot} can be performed on mini-batches by sub-sampling the \textit{source} and \textit{target}  distribution on very large datasets as suggested in \cite{Damodaran2018} which has been shown to provide robust estimators in \cite{fatras2019learning}.


{\small
\begin{algorithm}[t]
\caption{Optimization for MSDA-WJDOT\label{alg:wjdot}}
\begin{algorithmic}
\STATE Initialise $\pmb{\alpha}=\frac{1}{J}\mathbf{1}_J$ and $\pmb{\theta}$ parameters of $f_{\pmb{\theta}}$ and steps $\mu_{\pmb{\alpha}}$ and $\mu_{\pmb{\theta}}$.
\REPEAT 
\STATE $\pmb{\theta}\leftarrow \pmb{\theta}-\mu_{\pmb{\theta}}\nabla_{\pmb{\theta}} W_D\Big(\hat p_T^{f}, \sum _{j=1}^{J} \alpha _{j} \hat p_{S,j}\Big)$
\STATE $\pmb{\alpha}\leftarrow P_{\Delta^J}\Big(\pmb{\alpha}-\mu_{\pmb{\alpha}}\nabla_{\pmb{\alpha}} W_D(\hat p_T^{f}, \sum _{j=1}^{J} \alpha _{j}\hat p_{S,j}) \Big)$
\UNTIL{Convergence}
\end{algorithmic}
\end{algorithm}
}

\subsection{Related work}

\paragraph{MSDA approaches learning only the classifier}
MSDA-WJDOT is 
related to JDOT \citep{Courty2017} but proposes a non-trivial extension of it to multisource domain adaption.
Indeed, there are two simple ways to apply JDOT to  multi-source DA, which we refer to as Concatenated JDOT (\texttt{CJDOT}) and Multiple JDOT (\texttt{MJDOT}). The first one consists in concatenating all the \textit{source} samples into one \textit{source} distribution (equivalent to uniform $\pmb{\alpha}$ if all $N_j$ are equal) and using classical JDOT on the resulting distribution. The second one consists in optimizing a sum of JDOT losses for every \textit{source} distribution but again, this leads to uniform impact of the \textit{sources} on the estimation. 
It is clear that both approaches are not robust when some \textit{sources} distributions are very different from the \textit{target} (those would have a small weight in MSDA-WJDOT). 
{Recently, \cite{montesuma2021} proposed to compute a  Wasserstein barycenter to aggregate the source marginal distributions. Once the intermediate domain is computed, they transport the Wasserstein barycenter into the target domain using the Sinkhorn algorithm \citep{Cuturi2013} with (\texttt{WTB}$_{reg}$) or without (\texttt{WTB}) class regularization.
The Wasserstein barycenter is also used in another MSDA approach, called \texttt{JCPOT} \citep{redko2019optimal}, to estimate the class proportion. This method, based on \cite{Courty2015}, has been proposed to address only \textit{target} shift (change in proportions between the classes) and satisfies a generalization bound showing that estimating the class proportion in the \textit{target} distribution is key to recovering good performances. 
MSDA-WJDOT can also handle the \textit{target} shift as a special case since the reweighting $\pmb{\alpha}$ is directly related to the proportion of classes. A crucial difference between MSDA-WJDOT and the barycenter-based approaches described above  is that they rely  only on aligning marginal distributions,  whereas the proposed method 
aligns joint distributions
by optimizing a Wasserstein distance in the joint embedding/label space.}\\
Also note that MSDA-WJDOT relies on a weighting of the samples where the weight is shared inside the \textit{source} domains. This is a similar approach {to} 
DA approaches such as Importance Weighted Empirical Risk Minimization (\texttt{IWERM}) \citep{Sugiyama2007} designed for Covariate Shift that use a reweighing of all the samples. One major difference is that we only estimate a relatively small number of weights in $\pmb{\alpha}$ leading to a better posed statistical estimation. It is indeed well known that estimation of continuous density which is necessary for a proper individual reweighting of the samples is a very difficult problem in high dimension. 
{All the above mentioned methods do not require to learn an embedding, whose estimation may be computationally expensive and unnecessary (e.g., when a pre-trained model is available). Further, there exists numerous examples of \textit{source} variability in real life (such as rotation between the full distributions) that cannot be handled with a  global embedding.}

{\bf MSDA approaches estimating an embedding~}
As discussed in the introduction, the majority of recent DA approaches based on deep learning \citep{ganin2016domain,sun2016deep,shen2018wasserstein} relies on the estimation of an embedding that is invariant to the domain which means that the final classifier is shared across all domains when the embedding $g$ is estimated. Those approaches have been extended to multiple \textit{sources} with the objective that the embedded distributions between \textit{sources} and \textit{target}  are similar. {Authors in \cite{Xu2018} propose an algorithm based on adversarial learning, named Deep Cocktail Network (\texttt{DCTN}), to learn a feature extractor, domain discriminators and \textit{source} classifiers. The domain discriminator provides multiple source-target-specific perplexity scores that are used to weight the source-specific classifier predictions and produce the \textit{target}  estimation. In  \cite{Peng2018}, the embedding is learned by aligning moments of the \textit{source} and \textit{target}  distributions, by an approach called Moment matching (\texttt{M}$^{\pmb{3}}$\texttt{SDA}) .}
Our approach differs greatly here as we do not try to cancel the variability across \textit{sources} but to embrace it by allowing the approach to automatically find the \textit{source} domains closest in terms of embedding and labeling function.

\section{Numerical Experiments}\label{sec:experiment}
\begin{figure*}[ht]
\begin{minipage}[b]{1.0\linewidth}
\centering\includegraphics[width=.95\linewidth]{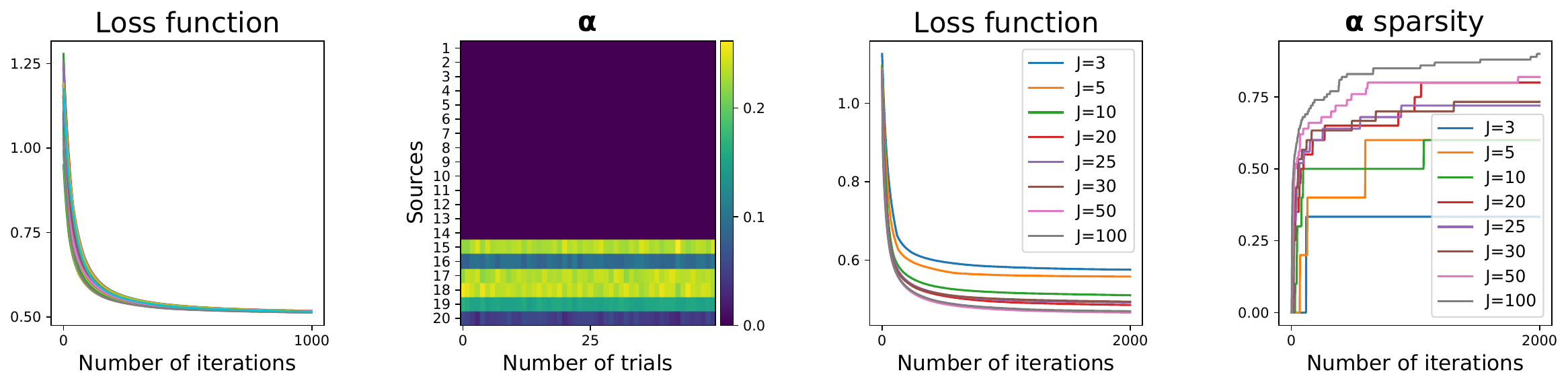}
\vspace{-.1truecm}
\caption{(Left and Center-Left) Loss function and $\alpha$ coefficients with different weights initializations. (Center-Right and Right) Loss function and $\alpha$ sparsity for increasing number of sources $J$.}
\label{fig:stability}
\end{minipage}
\end{figure*}   

In this section, we first discuss the implementation and the robustness of MSDA-WJDOT. We then evaluate and compare it with state-of-the-art MSDA methods, on both simulated and real data. {The numerical implementation relies on the Pytorch  \citep{paszke2017automatic} and Python Optimal Transport \citep{flamary2021pot} toolboxes and will be released upon publication.}

{\bf Practical Implementation} We used in all numerical experiments the MSDA-WJDOT solver from Algorithm \ref{alg:wjdot}. We recall that in this paper we assume to have access to a meaningful (as in discriminant) embedding $g$.  This is a realistic scenario due to the wide availability of pre-trained models and advent of reproducible research. Nevertheless we discuss here how to estimate such an embedding when none is available. 
To keep the variability of the \textit{sources} that is used by MSDA-WJDOT
we propose to estimate $g$ with the  Multi-Task Learning framework originally proposed in \cite{Caruana1997}, i.e. 
 \begin{equation}\label{standardMTL}\textstyle
 \operatorname*{min}_{g, \{f_{j}\}_{j=1}^{J}} \quad\sum _{j=1}^{J} \frac{1}{N_{j}} \sum _{i=1}^{N_{j}} \mathcal{L}(f_{j}\circ g(x_{j}^{i}), y_{j}^{i}).
\end{equation} 
This approach for estimating an embedding $g$ makes sense because it promotes a $g$ that is discriminant for all tasks but allows a variability thanks to the task specific final classifiers $f_j$ which is an assumption at the core of MSDA-WJDOT. 
We refer to MSDA-WJDOT where the embedding $g$ is learned with the above procedure as \texttt{MSDA-WJDOT}$_{MTL}$. Note that this is a two step procedure.\\
An important question, especially when performing unsupervised DA, is how to perform the validation of the parameters including early stopping. We propose here to use the sum of squared errors (SSE) between the \textit{target} points in the embedding and their cluster centroids. Specifically, we estimate cluster membership on the  the outputs through $f\circ g$. Then the SSE is computed in the embedding $g$ using the estimated clusters. Intuitively, if the SSE decreases it means that $f$ attributes the same label to samples of the target domain that are close in the embedding.
We also explored another strategy, based on the classifier accuracy on the sources, that is discussed and reported in the supplementary material.\\
{In addition, to provide a lower and an upper bound of the MSDA performance, we implemented supervised classification methods trained on the \textit{sources} (\texttt{Baseline}), the \textit{target} (\texttt{Target}), on both \textit{sources} and \textit{target} (\texttt{Baseline+Target}) domain. We consider \texttt{Baseline} as a performance lower bound as the \textit{target} domain is not used during training, whereas \texttt{Target} and \texttt{Baseline+Target} are two unrealistic approaches that use labels in \textit{target}. Note that \texttt{Target} trains a classifier using only \textit{target} labels and is more prone to overfitting since less samples are available. Since we have access to labels for \texttt{Target} and \texttt{Baseline+Target}, we validate the model by using the classification accuracy on the \textit{target}  validation set making those two approaches clear upper bounds on the attainable performance for each dataset. 
All methods are compared on the same dataset split in training (70\%), validation (20\%) and testing (10\%) but the validation set is used only for \texttt{Baseline+Target} and \texttt{Target}.}


\begin{figure*}[!ht]
\begin{minipage}[b]{.95\linewidth}
\centering\includegraphics[width=1.0\linewidth]{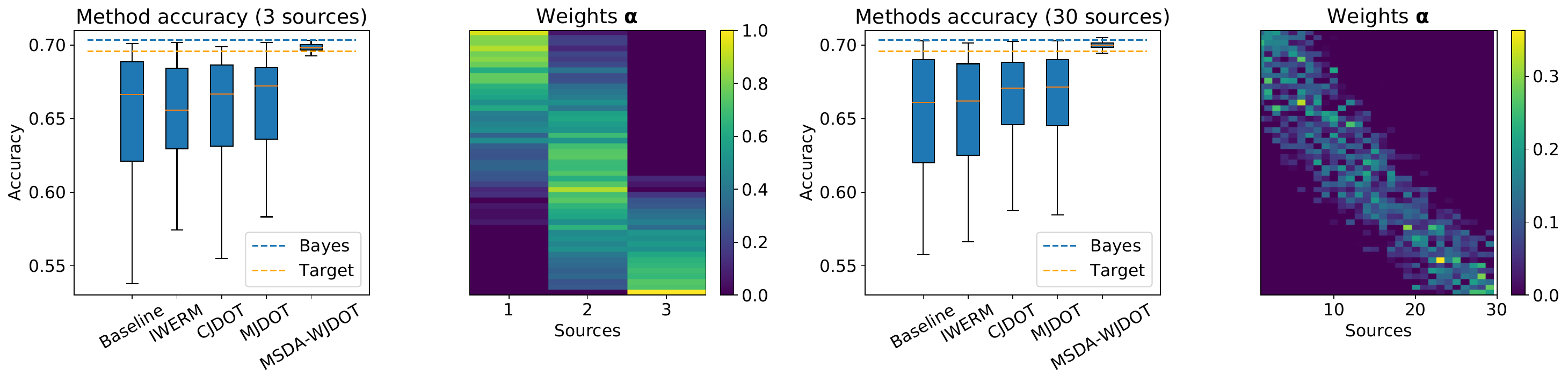}
\end{minipage}
\vspace{-.1truecm}
\caption{Simulated dataset. Methods' accuracy and recovered  $\boldsymbol\alpha$ weights for an increasing rotation angle of the \textit{target} samples: (Left and Center-Left) $J=3$ and (Right and Center-Right) $J=30$ sources.}
\label{toydata_boxplot}
\end{figure*}

{\bf Algorithm convergence and stability}
In Figure \ref{fig:stability} (\textit{Left} and \textit{Center-left}) we show the stability of the algorithm for different weights initialization. The loss function always converges and the $\pmb{\alpha}$ coefficients are not affected by the initialization. Moreover, we observed in practice that choosing the same step for $\pmb{\alpha}$ and $\pmb{\theta}$ does not degrade the performance and in all experiments we validated it via early stopping. We also noticed a fast convergence of the weights $\pmb{\alpha}$, meaning that the relevant domains are quickly identified. This behavior is illustrated in Fig. \ref{fig:stability} (\textit{Right}), where $\pmb{\alpha}$ sparsity rapidly increases for any choice of $S$ illustrating that only few relevant source distributions are used in practice. We also report the loss convergence for increasing number of sources $S$ (\textit{Center-right}).

{\bf Simulated Data: Domain Shift}
We consider a classification problem similar to what is illustrated in Figure \ref{fig:visu_method}, but with 3 classes, i.e. $\mathcal{Y}=\{0,1,2\}$, and in 3D. For the \textit{sources} and \textit{target}  we generate $N_{j}$ and $N_T$ samples from $J+1$ Gaussian distributions rotated of angle $\theta _{j}\in [0, \frac{3}{2}\pi]$ around the $x$-axis. 
As the data is already linearly separated, we set $g$ as the identity function in this experiment. 
We carried out many experiments in order to see the effect of different parameters such as the number of \textit{source} domains $J$, of \textit{source} samples $N_{j}$ and of \textit{target} samples $N_T$. Each experiment has been repeated 50 times. 
We report in Fig. \ref{toydata_boxplot} the accuracy of all methods with $N_{j}=N_{T}=300$ for $J=3$ (Left) and $J=30$ (Right). All competing methods are clearly outperformed by \texttt{MSDA-WJDOT} both in term of performance and variance even for a limited number of sources. Interestingly MSDA-WJDOT can even outperform \texttt{Target} due to its access to a larger number of samples.  Another important aspect of MSDA-WJDOT is the obtained weights $\boldsymbol\alpha$ that can be used for interpretation. We show in Fig. \ref{toydata_boxplot} the $\boldsymbol\alpha$ weights that are attributed to the \textit{sources} (ordered on the $x$-axis by increasing rotation angles), for in an increasing rotation angle in the \textit{target} samples ($y$-axis). The estimated weights tend to be sparse and put more mass on \textit{sources} that have a similar angle \emph{i.e.} we recover automatically the closest \textit{sources} in the joint distribution manifold. Note that we only report the method's performances on those two configurations; the results for other experiments can be found in the supplementary material.\\
{We next investigate how the function $\Lambda$ in \eqref{eq:lambda} behaves when the weights $\pmb{\alpha}$ are optimized \emph{w.r.t.} the first term of the bound in Theorem \ref{thm:gen2}. To this end we computed for $30$ sources an upper bound of $\Lambda$ with the $0$-$1$ loss by using the estimated $\hat f$  instead of the minimizer in 
\eqref{eq:lambda}. We recover a value of $0.57$ that is very close to twice the Bayes error, corresponding to the best possible value for $\Lambda$ in this experiment. On the other hand, the value for the upper bound of $\Lambda$ for a uniform $\pmb{\alpha}$ is $0.64$ and  $0.65$ in average for 10000 randomly drawn values of $\pmb{\alpha}$. This suggests that optimizing $\pmb{\alpha}$ with MSDA-WJDOT leads to a minimization of $\Lambda$ in the generalization bound. 
}

\begin{figure*}[ht]
\begin{minipage}[b]{1.0\linewidth}
    \centering
    \includegraphics[width=.95\linewidth, height=3.4cm]{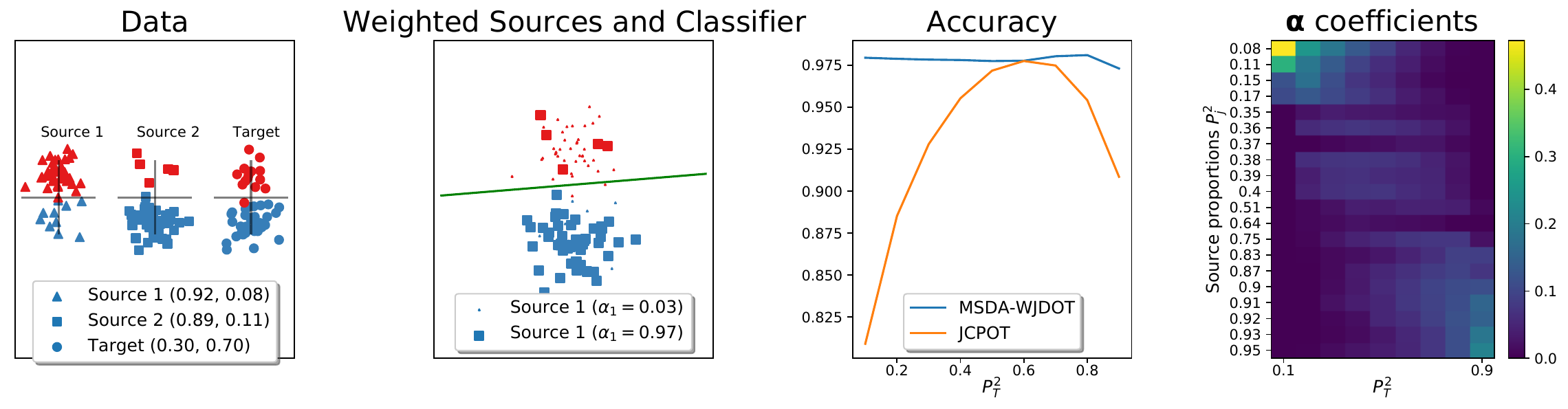}
    \end{minipage}
    \vspace{-.4truecm}
    \caption{Illustration of MSDA-WJDOT on target shift problem. (Left) illustration of 2 \textit{source} and \textit{target} distributions with unbalanced classes. (Center-Left) \textit{source} distributions weighted by $\pmb{\alpha}$ and estimated target classifier. (Center-Right and Right) classification accuracy of MSDA-JDOT and JCPOT and $\pmb\alpha$ coefficients at varying of class proportions in \textit{target} dataset.}
    \label{fig:targetshift}
\end{figure*}

{\bf Simulated Data: Target Shift~}
We take into account the target shift problem with 2D \textit{source} and \textit{target} datasets which present different proportions of classes. The proportion of the class $c$ in the \textit{source} $j$ is defined as $P_{j}^{c}=\frac{\#\{y_{j}^{i}=c\}}{N_j}$ (and similarly for the \textit{target}). We consider a binary classification task and we sample \textit{sources} and \textit{target} datasets from the same Gaussian distribution. In Fig. \ref{fig:targetshift} (Left and Center) we illustrate two \textit{sources} and \textit{target} distributions and how MSDA-WJDOT reweights the \textit{sources}. As we can see, almost all the mass is concentrated on Source 2 ($\alpha _{2} \gg \alpha_{1}$) because its class proportion is closer to the \textit{target} one. Instead, Source 1 has a class proportion inverted w.r.t. the \textit{target}. 
In the experiment reported in Fig. \ref{fig:targetshift} (Center-Right and Right) we have $J=20$ \textit{sources} with $P_{j}^{2}$ randomly generated between $0.1$ and $0.9$ (we ordered the \textit{sources} s.t. $P_{j}^{2}\leq P_{j+1}^{2}$). We show the average classification accuracy and the $\pmb\alpha$ weights over 50 trials for varying $P_{T}^{2}$ in $\{0.1, 0.2, \cdots, 0.9\}$. Our method always outperforms \texttt{JCPOT} and selects the \textit{sources} with a proportion of classes closer to the one in the \textit{target}.


{\bf Object Recognition~}
%
The Caltech-Office dataset \citep{Gong2012} 
contains four different domains: Amazon, Caltech \citep{Griffin2007}, 
Webcam and DSLR. 
The variability of the different domains come from several factors: presence/absence of background, lightning conditions, noise, etc. We use for  the embedding function $g$ the output of the 7th layer of a pre-trained DeCAF model \citep{donahue2014decaf}, similarly to what was done in \cite{Courty2015}, resulting into an embedding space $\mathcal{G}\in \mathbb{R}^{4096}$.  For $f$, we employ a one-layer neural network. Training is performed with Adam optimizer with 0.9 momentum and $\epsilon = e^{-8}$. Learning rate and $\ell_2$ regularization on the parameters are validated for all methods. In JDOT extensions and MSDA-WJDOT, we also validate the $\beta$ parameter weighting the feature distance in the cost~\eqref{eq:JDOT}. \\
{The aim of this experiment is to evaluate MSDA-WJDOT and compare it with the current literature in the setting in which the embedding is given.} The performance of the methods is reported in Table \ref{tab:CaltechOffice}. {We can see that MSDA-WJDOT is state of the art providing the best Average Rank (AR)}. Note that the DeCAF pre-trained embedding was originally designed in part to minimize the divergence across domains which as discussed is not the best configuration for MSDA-WJDOT but it still performs very well showing the robustness of MSDA-WJDOT to the embedding.
Moreover, we observed that for each adaptation problem MSDA-WJDOT provides one-hot vector $\pmb{\alpha}$ (reported in supplementary) suggesting that only one \textit{source} is needed for the \textit{target} adaptation. Interestingly the source selected by MSDA-WJDOT for each target is the one that was reported with the best performance for single-source DA in \cite{Courty2015}, which shows that MSDA-WJDOT can automatically find the relevant sources with no supervision.

\begin{table}[!t]
\caption{Object recognition accuracy. The last column reports the average rank across target domain. Results of methods marked by $^*$ are from \cite{montesuma2021}.
}
\label{tab:CaltechOffice}
\begin{center}
\vspace{-.2truecm}
\resizebox{\linewidth}{!}{%
\begin{tabular}{l|c|c|c|c|c}  
\hline
\textbf{Method} & \textbf{Amazon} & \textbf{dslr} & \textbf{webcam} & \textbf{Caltech10} & \textbf{AR} \\
\hline
\texttt{Baseline} & $93.13 \pm 0.07 $& $94.12 \pm 0.00$ & $89.33 \pm 1.63$ & $82.65 \pm 1.84$ &  5.00 \\
\hline
\texttt{IWERM} & $93.30 \pm 0.75$ & $\pmb{100.00 \pm 0.00}$ &$89.33 \pm 1.16$ & $\pmb{91.19 \pm 2.57} $& 2.75 \\ 
\texttt{CJDOT}&  $93.71 \pm 1.57$ & $93.53\pm4.59$  & ${90.33 \pm 2.13}$ & $85.84 \pm 1.73$ &  {3.50}\\
\texttt{MJDOT}& $94.12 \pm 1.57$& $97.65\pm  2.88$ & $90.27 \pm 2.48$ & $84.72\pm 1.73$ & {3.00} \\
\texttt{JCPOT}$^*$ & $79.23\pm 3.09 $& $81.77\pm 2.81 $&$93.93\pm 0.60$&$77.91 \pm 0.45 $& {5.50}\\
\texttt{WBT}$^*$ & $59.86\pm2.48 $& $60.99\pm2.15 $&$64.13\pm 2.38$&$62.80 \pm 1.61 $&{7.25}\\
\texttt{WBT$_{reg}^*$} & $92.74\pm 0.45 $& $95.87\pm 1.43 $&$ \pmb{96.57\pm 1.76}$&$ 85.01\pm 0.84 $&{4.00}\\
\hline
\texttt{MSDA-WJDOT} & $\pmb{94.23\pm 0.90}$ & $\pmb{100.00\pm 0.00}$ & $89.33\pm 2.91$ & $85.93\pm 2.07$ & \textbf{2.25} \\
\hline
\texttt{Target} & $95.77 \pm 0.31$& $88.35\pm 2.76$ & $99.87\pm 0.65$  & $89.75\pm 0.85$ & -  \\
\texttt{Baseline+Target} & $94.78 \pm 0.48$ &  $99.88\pm 0.82$ & $100.00\pm 0.00$ & $91.89\pm 0.69$ & -  \\
\hline
\end{tabular}}
\end{center}
\end{table}
%
{\bf Music-speech Discrimination~}
{We now tackle a MSDA problem in which both the embedding and the \textit{target} classifier need to be learned. Specifically,} we consider the music-speech discrimination task introduced in \cite{Tzanetakis2002}, which includes 64 music and speech tracks of 30 seconds each. We generated 14 noisy datasets by combining the raw tracks with different types of noises from a noise dataset
({\small \url{spib.linse.ufsc.br/noise.html}}). The noisy datasets have been synthesised by PyDub python library \citep{pydub}. We then used the libROSA python library \citep{librosa} to extract 13 MFCCs, computed every 10ms from 25ms Hamming windows followed by a z-normalization per track. 
We chose each of the four noisy datasets F16, Bucaneer2 (B2), Factory2 (F2), and Destroyerengine (D) as \textit{target} domains, considering the remaining noisy datasets and the clean dataset as labelled \textit{source} domains. The feature extraction $g$ is a Bidirectional Long Short-Term Memory (BLSTM) recurrent network with 2 hidden layers of 50 memory blocks each. The $f$ classifier is learned as one feed-forward layer. Model and training details are reported in the supplementary materials.

We report in Table \ref{tab:MSdiscrimination}, the mean and standard deviation accuracy on the testing set of each \textit{target} dataset over 50 trials, as well as the Average Rank for each method.
First note that on this hard adaptation problem the \texttt{Baseline+Target} approach only slightly improves the \texttt{Baseline}, and most of the methods performance shows large variance. As expected, \texttt{MSDA-WJDOT}$_{MTL}$ significantly outperforms \texttt{MSDA-WJDOT} 
confirming the importance of estimating an embedding $g$ exploiting the \textit{source} variability. \texttt{MSDA-WJDOT}$_{MTL}$ achieves a 1.25 Average Rank outperforming all the other MSDA methods 
and also presents low standard deviation, showing robustness to small sample size. Surprisingly, \texttt{MSDA-WJDOT}$_{MTL}$ even outperforms both the \texttt{Target} and \texttt{Baseline+Target} methods, where the labels are available.

\begin{table}[!t]
\caption{Music-Speech discrimination accuracy and average rank across target domains. Results of methods marked by $^*$ are from \cite{montesuma2021}.
}
\label{tab:MSdiscrimination}
\begin{center}
\vspace{-.2truecm}
\resizebox{\linewidth}{!}{%
\begin{tabular}{l|c|c|c|c|c}  
\hline
\textbf{Method} & \textbf{F16} & \textbf{B2} & \textbf{F2} & \textbf{D} & \textbf{AR}\\
\hline
\texttt{Baseline} & $69.67 \pm 8.78$ & $57.33 \pm 7.57$ & $83.33 \pm 9.13$ & $87.33\pm 6.72$ & {9.25} \\
\hline
\texttt{IWERM}&$72.22\pm 3.93$ & $58.33 \pm 5.89$& $85.00 \pm 6.23$ & $81.64 \pm 3.33$ &  {8.75} \\
\texttt{IWERM}$_{MTL}$& $75.00\pm 0.00$ & $66.67 \pm 0.00$ & \pmb{$ 100.00 \pm 0.00$}&  $98.33
\pm 3.33 $ & {4.00}\\
\texttt{DCTN} & $66.67\pm 3.61$ & $68.75 \pm 3.61$ & $87.50 \pm 12.5$ & $94.44 \pm 7.86$ & {6.50}\\ 
\texttt{M}$^{\pmb{3}}$\texttt{SDA} & $70.00 \pm 4.08$ & $61.67 \pm 4.08$& $85.00 \pm 11.05$ & $83.33 \pm 0.00 $ & {8.50}\\
\texttt{CJDOT} & $ 59.50 \pm 13.95 $ & $ 50.00\pm 0.00$ & $ 83.33 \pm 0.00 $ & $91.67 \pm 0.00 $ & {9.75} \\
\texttt{CJDOT}$_{MTL}$ & $ 83.83 \pm 5.11 $ & $ 74.83 \pm 1.17$ & $ \pmb{100.00 \pm 0.00}$ & $ 95.74 \pm 16.92 $ & {3.25}\\
\texttt{MJDOT}& $ 66.33  \pm 9.57 $ & $ 50.00\pm 0.00$ & $ 83.33 \pm 0.00 $ & $91.67 \pm 0.00 $&  {9.50}\\
\texttt{MJDOT}$_{MTL}$& $ 86.00 \pm 4.55 $ & $ 72.83\pm 5.73$ & $97.67 \pm 3.74$ & $97.74 \pm 8.28$ & 3.50 \\
\texttt{JCPOT}$^*$ & $88.67 \pm 1.67$& $92.55 \pm 2.11$&$ 82.41 \pm 2.22$&$ 87.89 \pm 1.39$& {5.50}\\
\texttt{WBT}$^*$ & $56.63 \pm 6.56 $& $56.88 \pm 9.54 $&$59.38 \pm 2.61$&$56.63 \pm 6.88 $&{11.75}\\
\texttt{WBT$_{reg}^{*}$}& $\pmb{94.92 \pm 0.68} $& $\pmb{96.27 \pm 1.60} $&$ 96.87 \pm 0.94$&$ 92.98 \pm 1.38 $&{3.00}\\
\hline
\texttt{MSDA-WJDOT} & $83.33 \pm 0.00$ & $58.33 \pm 6.01$ & $87.00\pm 6.05$ & $89.00 \pm 4.84$&{7.00} \\
\texttt{MSDA-WJDOT}$_{MTL}$& $87.17 \pm 4.15 $& $74.83\pm 1.20 $& $99.67 \pm 1.63$ & $\pmb{99.67 \pm 1.63} $& \textbf{2.25} \\
\hline
\texttt{Target} & $ 73.67 \pm 6.09
$  & $69.17\pm 7.50$ & $77.33 \pm 4.73$  & $73.17\pm 9.90$ & - \\
\texttt{Baseline+Target} & $71.06\pm 9.31 $ & $67.62\pm 11.92$ & $85.33 \pm 11.85$ & $79.53 \pm 10.05$ & - \\
\hline
\end{tabular}}
\end{center}
\end{table}

\section{Conclusion}
\label{sec:conc}
We presented
a novel approach for multi-source DA that relies on OT for propagating labels from the \textit{sources} and a weighting of the \textit{source} domains that selects the best \textit{sources} for the \textit{target} task at hand in order to get a better prediction.
We provided results that show that the proposed approach is theoretically grounded. We present numerical experiments on simulated data that shows
the effectiveness of our method on both \textit{domain} and \textit{target shift} problems. Finally, we illustrate the good performance of MSDA-WJDOT on real-world benchmark datasets. 
Future works will investigate a regularization of $\boldsymbol\alpha$ and estimating simultaneously the embedding $g$ with MSDA-WJDOT instead of pre-training it with multitask learning. 
The embedding could indeed be updated for each new \textit{target} which suggests  an incremental formulation for MSDA-WJDOT that could be 
valuable in practice. 

\subsubsection*{Acknowledgements}
This work was partially funded through the 3IA Cote
d’Azur Investments ANR-19-P3IA-0002 of the French National Research Agency (ANR), the DECIPHER-ASL – Bando PRIN 2017 grant (2017SNW5MB - Ministry of University and Research, Italy), and
a grant from SAP SE and 5x1000, assigned to the University of Ferrara - tax return 2017. This research was produced within the framework of Energy4Climate Interdisciplinary Center (E4C) of IP Paris and Ecole des Ponts ParisTech. This research was supported by 3rd Programme d’Investissements d’Avenir ANR-18-EUR-0006-02. This action benefited from the support of the Chair "Challenging Technology for Responsible Energy" led by l’X – Ecole polytechnique and the Fondation de l’Ecole polytechnique, sponsored by TOTAL.

\bibliography{strings,refs}

\newpage

\onecolumn

\begin{center}
{\bf \huge Supplementary Material}
\end{center}


\setcounter{section}{0}
\setcounter{theorem}{0}
\setcounter{definition}{0}
\setcounter{lemma}{0}
\setcounter{table}{2}

\vspace{.3truecm}
\appendix
{The supplementary material is organized as follows. In Section~\ref{app:A}
we provide proof of {\bf Lemma 1}, {\bf Lemma 2} and {\bf Theorem 1}. For reader's convenience the results are repeated in this supplementary material. Section \ref{app:B} recalls the MSDA-WJDOT algorithm and defines the projection to the simplex implemented in the algorithm. Finally, in Section \ref{app:C} we present additional numerical experiments.}

\section{Proofs}
\label{app:A}

\subsection{Proof of Lemma 1}

\begin{lemma}\label{lemma1}
For any hypothesis $f \in \mathcal{H}$, denote as $\varepsilon_{p_T}(f)$ and $\varepsilon_{p_S^{\boldsymbol{\alpha}}}(f)$, the expected loss of $f$ on the \textit{target}  and on the weighted sum of the \textit{source} domains, with respect to a loss function $L$ bounded by $B$. We have
\begin{equation}\label{eq:bound_tv}
    \varepsilon_{p_T}(f) \leq \varepsilon_{p_S^{\boldsymbol{\alpha}}}(f) +B \cdot D_{TV}\left(p_S^{\boldsymbol{\alpha}},p_T\right)
\end{equation}
where $p_S^{\boldsymbol{\alpha}}=\sum_{j=1}^J \alpha_j p_{S,j}$ is a convex combination of the \textit{source} distributions with weights $\boldsymbol{\alpha}\in\Delta^J$, and $D_{TV}$ is the total variation distance.
\end{lemma}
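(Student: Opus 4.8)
The plan is to prove this as a direct consequence of the definition of expected loss together with the variational characterisation of the total variation distance, treating $p_S^{\boldsymbol{\alpha}}$ and $p_T$ as two arbitrary distributions on $\mathcal{G}\times\mathcal{Y}$; the convex-combination structure of $p_S^{\boldsymbol{\alpha}}$ plays no special role and need not be exploited. First I would rewrite the gap between the two expected losses as a single integral against the signed measure $\nu := p_T - p_S^{\boldsymbol{\alpha}}$,
\[
\varepsilon_{p_T}(f) - \varepsilon_{p_S^{\boldsymbol{\alpha}}}(f) = \int_{\mathcal{G}\times\mathcal{Y}} L(y,f(x))\, d\nu(x,y),
\]
which is legitimate because both expectations are finite (as $L$ is bounded by $B$) and $\varepsilon_{p}(f)=\mathbb{E}_{(x,y)\sim p}[L(y,f(x))]$ is linear in $p$.

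The key step is to bound this integral using only the two-sided bound $0 \le L \le B$. I would invoke the Hahn--Jordan decomposition $\nu = \nu^+ - \nu^-$ (equivalently, when densities exist, split the domain of integration into $\{p_T \ge p_S^{\boldsymbol{\alpha}}\}$ and its complement). Since $\nu^-$ is a positive measure and $L \ge 0$, the term $\int L\, d\nu^-$ is non-negative and can be dropped, while $L \le B$ controls the positive part:
\[
\int L(y,f(x))\, d\nu = \int L\, d\nu^+ - \int L\, d\nu^- \;\le\; \int L\, d\nu^+ \;\le\; B\,\nu^+(\mathcal{G}\times\mathcal{Y}).
\]
Finally I would identify $\nu^+(\mathcal{G}\times\mathcal{Y}) = \sup_A |p_T(A) - p_S^{\boldsymbol{\alpha}}(A)| = D_{TV}(p_S^{\boldsymbol{\alpha}},p_T)$, using that $\nu(\mathcal{G}\times\mathcal{Y})=0$ forces $\nu^+$ and $\nu^-$ to have equal mass. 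Substituting this back and rearranging gives exactly the claimed inequality.

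I do not anticipate a genuine obstacle: the argument is an elementary measure-theoretic bound. The only point requiring care is the convention for $D_{TV}$ so that the leading constant is exactly $B$ rather than $2B$. Using the formulation $D_{TV}(P,Q)=\sup_A|P(A)-Q(A)|=\nu^+(\mathcal{G}\times\mathcal{Y})$ together with the two-sided bound $0\le L\le B$, rather than centring $L$ and bounding via $\|L\|_\infty$ with the dual $\sup_{\|h\|_\infty\le1}$ characterisation, delivers the sharp constant directly. A minor remark worth including is that the bound is agnostic to whether $p_S^{\boldsymbol{\alpha}}$ and $p_T$ admit densities, so no absolute-continuity assumption is needed.
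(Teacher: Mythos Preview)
Your argument is correct and follows the same overall strategy as the paper: write the gap $\varepsilon_{p_T}(f)-\varepsilon_{p_S^{\boldsymbol{\alpha}}}(f)$ as a single integral against the difference of the two distributions and control it by the total variation distance. The paper's version is slightly cruder: it bounds $|\varepsilon_{p_T}(f)-\varepsilon_{p_S^{\boldsymbol{\alpha}}}(f)|\le \int |L(y,f(x))|\,|p_S^{\boldsymbol{\alpha}}-p_T|\,dxdy \le B\int|p_S^{\boldsymbol{\alpha}}-p_T|$ and then \emph{defines} $D_{TV}$ to be this $L^1$ distance. Your route via the Hahn--Jordan decomposition exploits both sides of $0\le L\le B$ rather than only $|L|\le B$, so it delivers the sharper constant $B$ under the $\sup_A|P(A)-Q(A)|$ convention (the paper would give $2B$ under that convention). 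Your remark about not needing densities is also an improvement in generality over the paper's density-based write-up. In short: same idea, your execution is a touch more careful about constants and conventions.
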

\begin{proof} We define the error of an hypothesis $f$ with respect to a loss function $L(\cdot,\cdot)$ and a joint probability distribution $p(x,y)$ as 
$$
\varepsilon_{p}(f) = \int p(x,y) L(y,f(x)) dxdy
$$
then using simple arguments, we have
\begin{align}
    \varepsilon_{p_T}(f) &= \varepsilon_{p_T}(f) + \varepsilon_{p_S^{\boldsymbol{\alpha}}}(f) - \varepsilon_{p_S^{\boldsymbol{\alpha}}}(f)   \\ \nonumber
    &\leq\varepsilon_{p_S^{\boldsymbol{\alpha}}}(f)  + | \varepsilon_{p_T}(f) - \varepsilon_{p_S^{\boldsymbol{\alpha}}}(f)| \\\nonumber
    &\leq \varepsilon_{p_S^{\boldsymbol{\alpha}}}(f) + \int | p_S^{\boldsymbol{\alpha}}(x,y) - p_T(x,y)||L(y,f(x)| dxdy \\\nonumber
    & \leq \varepsilon_{p_S^{\boldsymbol{\alpha}}}(f) + B \int \big | p_S^{\boldsymbol{\alpha}}(x,y) - p_T(x,y) \big | dxdy
\end{align}
and using the definition of the total variation distance between distribution we conclude the proof.
\end{proof}

\subsection{Proof of Theorem 1}
The proof of this theorem follows the same steps as the one proposed by \cite{Courty2017} and we reproduce it here for a sake of completeness.

\begin{definition}[{Probabilistic Transfer Lipschitzness -- PLT Property}] Let $p_S$ and $p_T$ be respectively the \textit{source} and \textit{target}  distributions. Let $\phi : \mathbb{R} \rightarrow [0,1]$. A labeling function $f : \mathcal{G} \rightarrow \mathbb{R}$ and a joint distribution $\pi\in\Pi(p_S,p_T)$ over $p_S$ and $p_T$ are $\phi$-Lipschitz transferable if for all $\lambda > 0$, we have
$$
{\rm Prob}_{(x_S,x_T)\sim \pi}\big [|f(x_S) - f(x_T) |] > \lambda D(x_S,x_T) \big ] \le \phi(\lambda)
$$
with $D$ being a metric on $\mathcal{G}$.
\end{definition}
This property provides a bound on the probability of finding a couple of source-target examples that are differently labeled in a $(1/\lambda)$-ball with respect to $\pi$ and the metric $D$.

\begin{definition}{{\em (Similarity measure)}} 
Let $\mathcal{H}$ be a space of $M$-Lipschitz labelling functions. Assume 
that, for every $f \in \mathcal{H}$ and $x,x' \in {\cal G}$, $|f(x) - f(x^\prime)| \leq M$.
The similarity between $p_{S}^{\boldsymbol{\alpha}}$ and $p_T$ can defined \cite[Def. 5]{ben2010impossibility} as 
\begin{equation}
\Lambda(p_S^{\boldsymbol{\alpha}},p_T)=\min_{f\in\mathcal{H}} \varepsilon_{p_S^{\boldsymbol{\alpha}}}(f) + \varepsilon_{p_T}(f),\label{eq:Lambda_supp}
\end{equation}
where the risk is measured w.r.t. to a symmetric and $k$-Lipschitz loss function that satisfies the triangle inequality.
\label{defLambda_supp}
\end{definition}

\begin{lemma} 
Let $\mathcal{H}$ be the space described in Definition \ref{defLambda_supp} and assume that the function $f^*$ minimizing the Similarity measure in Eq. \ref{eq:Lambda_supp} satisfies the PTL property. Then, for any $f \in \mathcal{H}$, we have
\begin{equation}\label{eq:gen_bound}
\varepsilon_{p_T}(f) \leq W_D\left(p_S^{\boldsymbol{\alpha}}, p_T^f\right)
+\Lambda(p_S^{\boldsymbol{\alpha}},p_T)
+ kM \phi(\lambda),
\end{equation}
where $ \phi(\lambda)$ is a constant depending on the PTL of $f^\star$.
\end{lemma}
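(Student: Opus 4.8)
The plan is to follow the single-source JDOT argument of \cite{Courty2017}, adapting it to the weighted source mixture $p_S^{\boldsymbol{\alpha}}$. The whole proof is a chain of triangle inequalities for the loss $L$ (assumed symmetric, $k$-Lipschitz, satisfying the triangle inequality, with $L(a,a)=0$), glued together by a single optimal transport coupling. First I would introduce the minimizer $f^\star$ of the similarity measure $\Lambda(p_S^{\boldsymbol{\alpha}},p_T)$ and write, for every target pair $(x_T,y_T)\sim p_T$, that $L(y_T,f(x_T)) \le L(y_T,f^\star(x_T)) + L(f^\star(x_T),f(x_T))$. Taking the expectation over $p_T$ gives $\varepsilon_{p_T}(f) \le \varepsilon_{p_T}(f^\star) + \mathbb{E}_{x_T}[L(f^\star(x_T),f(x_T))]$, where the second expectation involves only the feature marginal of $p_T$, which coincides with that of $p_T^f$.

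Next I would let $\pi$ be the optimal coupling realizing $W_D(p_S^{\boldsymbol{\alpha}},p_T^f)$; a draw from $\pi$ is a pair $((x_S,y_S),(x_T,f(x_T)))$ whose source marginal is $p_S^{\boldsymbol{\alpha}}$ and whose target feature marginal matches $p_T$, so the residual term is exactly $\mathbb{E}_{\pi}[L(f^\star(x_T),f(x_T))]$. Chaining the triangle inequality twice, first through $f^\star(x_S)$ and then through the true source label $y_S$, yields $L(f^\star(x_T),f(x_T)) \le L(f^\star(x_T),f^\star(x_S)) + L(f^\star(x_S),y_S) + L(y_S,f(x_T))$. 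Taking $\mathbb{E}_{\pi}$ and collecting terms, $\mathbb{E}_{\pi}[L(f^\star(x_S),y_S)] = \varepsilon_{p_S^{\boldsymbol{\alpha}}}(f^\star)$, so $\varepsilon_{p_T}(f^\star)+\varepsilon_{p_S^{\boldsymbol{\alpha}}}(f^\star)=\Lambda(p_S^{\boldsymbol{\alpha}},p_T)$ appears exactly, while $\mathbb{E}_{\pi}[L(y_S,f(x_T))]$ is precisely the label part of the ground cost $D$.

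It then remains to control the Lipschitz-transfer term $\mathbb{E}_{\pi}[L(f^\star(x_S),f^\star(x_T))]$. Using that $L$ is $k$-Lipschitz with $L(a,a)=0$, I would bound it by $k\,\mathbb{E}_{\pi}|f^\star(x_S)-f^\star(x_T)|$ and split the expectation on the PTL event $\{|f^\star(x_S)-f^\star(x_T)| > \lambda\|x_S-x_T\|\}$. On its complement the integrand is at most $k\lambda\|x_S-x_T\|$, and on the event itself it is at most $kM$ since $f^\star\in\mathcal{H}$ has oscillation bounded by $M$; as the event has probability at most $\phi(\lambda)$ under $\pi$ by the PTL assumption on $f^\star$, this gives $\mathbb{E}_{\pi}[L(f^\star(x_S),f^\star(x_T))] \le k\lambda\,\mathbb{E}_{\pi}\|x_S-x_T\| + kM\phi(\lambda)$. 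Finally, choosing $\beta=\lambda k$ in $D$ makes $k\lambda\|x_S-x_T\|$ coincide with the feature part of the ground cost, so that $k\lambda\,\mathbb{E}_{\pi}\|x_S-x_T\| + \mathbb{E}_{\pi}[L(y_S,f(x_T))] = \mathbb{E}_{\pi}[D] = W_D(p_S^{\boldsymbol{\alpha}},p_T^f)$ by optimality of $\pi$. Summing the three contributions recovers the claimed bound.

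I expect the main obstacle to be bookkeeping rather than a deep idea: ensuring that a single coupling $\pi$ simultaneously (i) has the correct marginals so that the $\varepsilon_{p_S^{\boldsymbol{\alpha}}}(f^\star)$ and target-marginal terms are exact, (ii) is the OT optimizer so the transport contributions reassemble into $W_D$, and (iii) is the coupling with respect to which $f^\star$ is assumed to satisfy PTL. The matching $\beta=\lambda k$ between the Lipschitz-transfer slope and the feature weight in $D$ is the one non-obvious alignment, and one must keep the feature cost linear in $\|x_S-x_T\|$ (consistent with $D$ being a genuine metric in the PTL statement) for this cancellation to be exact.
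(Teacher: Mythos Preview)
Your proposal is correct and follows essentially the same route as the paper: introduce $f^\star$ via the triangle inequality for $L$, pass to the optimal coupling $\pi^\star\in\Pi(p_S^{\boldsymbol{\alpha}},p_T^f)$, split the resulting integrand into a Lipschitz-transfer piece (controlled by PTL, yielding $kM\phi(\lambda)$ plus the feature part of $D$) and a label piece (giving the $L$ part of $D$), and identify $\varepsilon_{p_T}(f^\star)+\varepsilon_{p_S^{\boldsymbol{\alpha}}}(f^\star)=\Lambda$. The only cosmetic difference is that the paper first writes $|\varepsilon_{p_T^f}(f^\star)-\varepsilon_{p_S^{\boldsymbol{\alpha}}}(f^\star)|$ and invokes a Kantorovich--Rubinstein-type inequality to pass to the coupling, whereas you work under $\pi^\star$ from the outset; your caveats about PTL being assumed for the specific coupling $\pi^\star$ and about keeping the feature cost linear so that $\beta=\lambda k$ matches are exactly the implicit assumptions the paper uses.
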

\begin{proof}
We have that 
$$
\begin{aligned}
    \varepsilon_{p_T}(f) & \equiv \mathbb{E}_{(x,y)\sim p_T} \big[L(y,f(x))\big] \\
 &\leq \mathbb{E}_{(x,y)\sim p_T} \big[L(y,f^\star(x)) + L(f^\star(x),f(x))\big] \\
 &= \varepsilon_{p_T}(f^\star)+ \mathbb{E}_{(x,y)\sim p_T} \big[L(f^\star(x),f(x))]\\
 &= \varepsilon_{p_T}(f^\star)+ \mathbb{E}_{(x,y)\sim p_{T}^f} \big[L(f^\star(x),f(x))] \\
 & = \varepsilon_{p_T}(f^\star) + \varepsilon_{p_T^f}(f^\star) + \varepsilon_{p_S^{\boldsymbol{\alpha}}}(f^\star)  
 - \varepsilon_{p_S^{\boldsymbol{\alpha}}}(f^\star)  \\
 & \leq |\varepsilon_{p_T^f}(f^\star)  - \varepsilon_{p_S^{\boldsymbol{\alpha}}}(f^\star) |  + \varepsilon_{p_S^{\boldsymbol{\alpha}}}(f^\star)  
 +\varepsilon_{p_T}(f^\star)
\end{aligned}
$$
where the second equality comes from the symmetry of the loss function and the third one is due to the fact that  $ \mathbb{E}_{(x,y)\sim p_T} L(f^\star(x),f(x)) =\mathbb{E}_{(x,y)\sim p_T^f} L(f^\star(x),f(x))=\mathbb{E}_{x\sim \mu_T} L(f^\star(x),f(x))$ since the label $y$ is not used in the expectation.

Now,  we analyze the first term in the {\rm r.h.s.} of the last inequality. Note  that samples drawn from  $p_T^f$ distribution can be expressed as  $(x_T,y_T^f)\sim p_T^f$ with $y_T^f=f(x_T)$.
\begin{align}
|\varepsilon_{p_T^f}(f^\star)  {-}\varepsilon_{p_S^{\boldsymbol{\alpha}}}(f^\star)| &
=  \left|\int_{\mathcal{G} \times \mathbb{R}} L(y,f^\star(x)) (p_T^f(x,y) - p_S^{\boldsymbol{\alpha}}(x,y)) dxdy           \right| 
\nonumber \\
 &= \left |\int_{\mathcal{G} \times \mathbb{R}} L(y,f^\star(x)) d(p_T^f - p_S^{\boldsymbol{\alpha}}) \right| 
 \nonumber \\
 & \leq \int_{(\mathcal{G} \times \mathbb{R})^2}\Big| L(y_T^f, f^\star(x_T)) - L(y_{\boldsymbol{\alpha}}, f^\star(x_{\boldsymbol{\alpha}}))\Big| 
d\pi^\star((x_{\boldsymbol{\alpha}},y_{\boldsymbol{\alpha}}),(x_T,y_T^f)) \label{eq:dualkanto} 
\\
%
& \leq  \int_{(\mathcal{G} \times \mathbb{R})^2} \Bigg[\Big| L(y_T^f, f^\star(x_T)) {-} L(y_T^f, f^\star(x_{\boldsymbol{\alpha}}))\Big|
\nonumber\\
&  \quad\quad + \Big| L(y_T^f, f^\star(x_{\boldsymbol{\alpha}})) {-} L(y_{\boldsymbol{\alpha}}, f^\star(x_{\boldsymbol{\alpha}})) \Big|\Bigg] d\pi^\star((x_{\boldsymbol{\alpha}},y_{\boldsymbol{\alpha}}),(x_T,y_T^f)) \nonumber\\
& \leq  \int_{(\mathcal{G} \times \mathbb{R})^2} \Bigg[  k \big|  f^\star(x_T) - f^\star(x_{\boldsymbol{\alpha}})) \big|  + \Big| L(y_T^f, f^\star(x_{\boldsymbol{\alpha}})) {-} L(y_{\boldsymbol{\alpha}}, f^\star(x_{\boldsymbol{\alpha}}) \Big|\Bigg] d\pi^\star((x_{\boldsymbol{\alpha}},y_{\boldsymbol{\alpha}}),(x_T,y_T^f)) \label{eq:theoklip}\\
& \leq    k M \phi(\lambda)  +\hspace{-.1truecm} \int_{(\mathcal{G} \times \mathbb{R})^2} \hspace{-.1truecm} \Bigg[ k\lambda D(x_T,x_{\boldsymbol{\alpha}}) {+}  \Big| L(y_T^f, f^\star(x_{\boldsymbol{\alpha}})) {-} L(y_{\boldsymbol{\alpha}}, f^\star(x_{\boldsymbol{\alpha}})) \Big|\Bigg] d\pi^\star((x_{\boldsymbol{\alpha}},y_{\boldsymbol{\alpha}}),(x_T,y_T^f)) \label{eq:theoptl}\\
& \leq    k M \phi(\lambda)  + \int_{(\mathcal{G} \times \mathbb{R})^2} \Bigg[\beta D(x_T,x_{\boldsymbol{\alpha}}) +   \Big| L(y_T^f, y_{\boldsymbol{\alpha}}) \Big|\Bigg] d\pi^\star((x_{\boldsymbol{\alpha}},y_{\boldsymbol{\alpha}}),(x_T,y_T^f)) \label{eq:theosymm}\\
&=   k M \phi(\lambda) + W_D(p_S^{\boldsymbol{\alpha}}, p_T^f).
\end{align}
Inequality in line \eqref{eq:dualkanto} is due to the Kantorovitch-Rubinstein theorem stating that for any coupling $\pi \in \Pi(p_{S}^{\boldsymbol{\alpha}},p_T)$ the following inequality holds
$$
  \left |\int_{\mathcal{G} \times \mathbb{R}} L(y,f^\star(x)) d(p_T^f - p_S^{\boldsymbol{\alpha}}) \right| 
\leq   \left |\int_{(\mathcal{G} \times \mathbb{R})^2}| L(y_T^f, f^\star(x_T)) - L(y_{\boldsymbol{\alpha}}, f^\star(x_{\boldsymbol{\alpha}})| 
d\pi((x_{\boldsymbol{\alpha}},y_{\boldsymbol{\alpha}}),(x_T,y_T^f))   \right |,
$$
followed by an application of the triangle inequality. Since, the above inequality applies for
any coupling, it applies also for $\pi^\star$. Inequality \eqref{eq:theoklip} is due to the assumption that the loss function is $k$-Lipschitz in its second argument. Inequality \eqref{eq:theoptl} derives from the PTL property with probability $1-\phi(\lambda)$ of $f^\star$ and $\pi^\star$. In addition, taking into account that the difference between two samples with respect to $f^\star$ is bounded by $M$, we have the term $kM\phi(\lambda)$ that covers the regions where PTL assumption does not hold. Inequality \eqref{eq:theosymm} is obtained from the symmetry of $D(\cdot,\cdot)$, the triangle inequality on the loss and by posing $k\lambda=\beta$.
\end{proof}

{
First we need to prove the following Lemma.
\begin{lemma} \label{lem:wass} For any distributions $\hat p_{S,j} , p_{S,j}$ and $\boldsymbol{\alpha}\in\Delta^J$ in the simplex we have
\[
W_D\left(\sum_{j=1}^J \alpha_j \hat p_{S,j}, \sum_{j=1}^J \alpha_j  p_{S,j}\right)\leq \sum_{j=1}^J \alpha_j W_D\left( \hat p_{S,j}, p_{S,j}\right).
\]
\end{lemma}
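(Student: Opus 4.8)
The plan is to exploit the fact that $W_D$ is by definition a minimum over the set of admissible couplings, so that exhibiting any single feasible transport plan immediately yields an upper bound on the left-hand side. Concretely, for each source index $j$ I would let $\pi_j^\star \in \Pi(\hat p_{S,j}, p_{S,j})$ be an optimal coupling, i.e. one attaining $W_D(\hat p_{S,j}, p_{S,j}) = \int D \, d\pi_j^\star$. The candidate plan for the aggregated problem is then the convex combination
\[
\pi := \sum_{j=1}^J \alpha_j\, \pi_j^\star,
\]
and the whole argument reduces to checking that $\pi$ is admissible and then evaluating its transport cost.

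The key step is the feasibility check: I claim that $\pi \in \Pi\big(\sum_{j} \alpha_j \hat p_{S,j},\, \sum_{j} \alpha_j p_{S,j}\big)$. Since $\boldsymbol{\alpha}\in\Delta^J$, $\pi$ is a nonnegative measure of total mass $\sum_j \alpha_j = 1$, hence a probability measure. Because marginalization is linear, the first marginal of $\pi$ is $\sum_j \alpha_j$ times the first marginal of $\pi_j^\star$, namely $\sum_j \alpha_j \hat p_{S,j}$, and likewise the second marginal is $\sum_j \alpha_j p_{S,j}$. In the discrete setting used throughout the paper this is simply the observation that stacking the optimal transport matrices with weights $\alpha_j$ produces a matrix whose row and column sums are the corresponding weighted combinations of the individual marginals, so $\pi$ lies in the required transport polytope.

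Finally I would plug $\pi$ into the objective and use linearity of the integral together with the defining minimality of $W_D$:
\[
W_D\Big(\textstyle\sum_{j} \alpha_j \hat p_{S,j},\, \sum_{j} \alpha_j p_{S,j}\Big)
\leq \int D \, d\pi
= \sum_{j=1}^J \alpha_j \int D \, d\pi_j^\star
= \sum_{j=1}^J \alpha_j\, W_D(\hat p_{S,j}, p_{S,j}),
\]
which is exactly the claim. There is essentially no genuine obstacle here, as this is just the joint convexity of optimal transport in its marginals; the only point requiring a moment of care is the marginal computation establishing admissibility of $\pi$, and the fact that $\boldsymbol{\alpha}$ being a probability vector is precisely what guarantees both that $\pi$ has unit mass and that the weighted sum of costs is itself a convex combination.
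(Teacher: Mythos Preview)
Your proposal is correct and follows essentially the same approach as the paper: take optimal couplings $\pi_j^\star$ for each pair, form the convex combination $\sum_j \alpha_j \pi_j^\star$, verify it is feasible for the aggregated marginals, and conclude via the minimality defining $W_D$. The paper's proof is identical in structure, only phrased slightly more tersely.
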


\begin{proof}
First we recall that the Wasserstein Distance between two distribution is 
\begin{equation}
    W_D(p,p')=\min_{\pi\in\Pi(p,p')} \int D(\mathbf{v},\mathbf{v}')\pi(\mathbf{v},\mathbf{v}')d\mathbf{v}d\mathbf{v}',
\end{equation}
where $\Pi(p,p')=\{\pi|\int \pi(\mathbf{v},\mathbf{v}')d\mathbf{v}'=p(\mathbf{v}), \int \pi(\mathbf{v},\mathbf{v}')d\mathbf{v}=p'(\mathbf{v}')\}$. Let $\pi_{S,j}^*$ be the optimal OT matrix between $\hat p_{S,j}$ and  $p_{S,j}$. It is obvious to see that $\sum_{j=1}^J \alpha_j \pi_{S,j}^*$ respects the marginal constraints for $W_D\left(\sum_{j=1}^J \alpha_j \hat p_{S,j}, \sum_{j=1}^J \alpha_j  p_{S,j}\right)$, i.e. 
$\sum_{j=1}^J \alpha_j \pi_{S,j}^* \in \Pi\left(\sum_{j=1}^J \alpha_j \hat p_{S,j}, \sum_{j=1}^J \alpha_j  p_{S,j}\right)$. Hence, $\sum_{j=1}^J \alpha_j \pi_{S,j}^*$ is a feasible solution for the OT problem and, consequently, the cost for this feasible solution is greater or equal than the optimal value  $W_D\left(\sum_{j=1}^J \alpha_j \hat p_{S,j}, \sum_{j=1}^J \alpha_j  p_{S,j}\right)$. Since $ \int D(\mathbf{v},\mathbf{v}')\sum_{j=1}^J\alpha_j \pi_{S,j}^*(\mathbf{v},\mathbf{v}')d\mathbf{v}d\mathbf{v}'=\sum_{j=1}^J \alpha_j W_D\left( \hat p_{S,j}, p_{S,j}\right)$ we recover the Lemma above.
\end{proof}

We can now prove {\bf Theorem 1}, which we also restate for the convenience of the reader.
\begin{theorem}\label{thm:gen2}
Under the assumptions of Lemma 2, let $\hat p_{S,j}$ be $j$-th source empirical distributions of $N_j$ samples and $\hat p_T$ the empirical target distribution with $N_T$ samples. Then for all $\lambda>0$ , with $\beta=\lambda k$ in the ground metric $D$ we have with probability $1-\eta$
\begin{equation}\label{eq:gen2}\scriptstyle
\begin{split}
    \varepsilon_{p_T}(f) \leq &W_D\left(\hat p_S^{\boldsymbol{\alpha}}, \hat p_T^f\right)+\sqrt{\frac{2}{c'}\log \frac{2}{\eta}}\left(\frac{1}{N_T}+\sum_{j=1}^J\frac{\alpha_j}{ N_j}\right)+ \Lambda(p_S^{\boldsymbol{\alpha}},p_T)
 + kM \phi(\lambda).
 \end{split}
\end{equation}
\end{theorem}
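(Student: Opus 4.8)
The plan is to depart from the population-level bound of Lemma~2, namely
\[
\varepsilon_{p_T}(f) \leq W_D\!\left(p_S^{\boldsymbol{\alpha}}, p_T^f\right) + \Lambda(p_S^{\boldsymbol{\alpha}},p_T) + kM\phi(\lambda),
\]
and to replace the \emph{population} Wasserstein term $W_D(p_S^{\boldsymbol{\alpha}}, p_T^f)$ by its \emph{empirical} counterpart $W_D(\hat p_S^{\boldsymbol{\alpha}}, \hat p_T^f)$, charging the gap to a statistical deviation term. Since the $\Lambda$ and $kM\phi(\lambda)$ contributions already appear verbatim in the target bound, the entire argument reduces to controlling this one Wasserstein distance.

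First I would use that, with the choice $\beta=\lambda k$ and a symmetric loss $L$ obeying the triangle inequality, the ground cost $D$ is a genuine metric on $\mathcal{G}\times\mathbb{R}$, so the associated $W_D$ is a metric on probability measures and satisfies the triangle inequality. Inserting the two empirical mixtures as intermediate points gives
\[
W_D\!\left(p_S^{\boldsymbol{\alpha}}, p_T^f\right) \leq W_D\!\left(p_S^{\boldsymbol{\alpha}}, \hat p_S^{\boldsymbol{\alpha}}\right) + W_D\!\left(\hat p_S^{\boldsymbol{\alpha}}, \hat p_T^f\right) + W_D\!\left(\hat p_T^f, p_T^f\right).
\]
The middle term is precisely the empirical quantity in the statement, so it only remains to bound the two estimation terms.

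For the source term I would apply Lemma~\ref{lem:wass} to the mixtures $\hat p_S^{\boldsymbol{\alpha}}=\sum_j\alpha_j\hat p_{S,j}$ and $p_S^{\boldsymbol{\alpha}}=\sum_j\alpha_j p_{S,j}$, which yields $W_D(p_S^{\boldsymbol{\alpha}}, \hat p_S^{\boldsymbol{\alpha}}) \leq \sum_{j=1}^J \alpha_j W_D(p_{S,j}, \hat p_{S,j})$; this is exactly where the weights $\alpha_j$ multiplying the $1/N_j$ enter. To each of the $J+1$ estimation terms $W_D(p_{S,j}, \hat p_{S,j})$ and $W_D(p_T^f, \hat p_T^f)$ I would then apply the measure-concentration bound for empirical Wasserstein distances used in the single-source analysis of \cite{Courty2017} (of Bolley--Guillin--Villani type), which gives, with probability at least $1-\delta$, $W_D(p,\hat p_N)\le \sqrt{\tfrac{2}{c'}\log\tfrac{1}{\delta}}\,\tfrac{1}{N}$ for a constant $c'$ attached to the measure. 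Choosing $\delta=\eta/2$ for each of the $J+1$ distributions and taking a union bound over them produces, with probability at least $1-\eta$, the aggregate deviation $\sqrt{\tfrac{2}{c'}\log\tfrac{2}{\eta}}\bigl(\tfrac{1}{N_T}+\sum_{j=1}^J\tfrac{\alpha_j}{N_j}\bigr)$. Substituting back into Lemma~2 finishes the proof.

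The step most in need of care is the target estimation term $W_D(\hat p_T^f, p_T^f)$, because the labels of the proxy $\hat p_T^f$ are produced by the \emph{learned} classifier $f$, so $f$ is data-dependent and a fixed-$f$ concentration bound is not directly licensed. The way I would handle it is to note that $p_T^f$ and $\hat p_T^f$ are the pushforwards of $\mu_T$ and $\hat\mu_T$ under the fixed Lipschitz map $x\mapsto(x,f(x))$, so the target deviation is controlled by the concentration of $\hat\mu_T$ to $\mu_T$ in feature space alone; following \cite{Courty2017} one states the bound for the particular $f$ output by the algorithm. All remaining ingredients---the metric triangle inequality, Lemma~\ref{lem:wass}, and the union bound---are routine.
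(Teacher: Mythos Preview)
Your proposal is correct and follows essentially the same route as the paper: start from Lemma~2, apply the triangle inequality for $W_D$ to insert the empirical mixtures, invoke Lemma~\ref{lem:wass} to split the source deviation into the $\alpha_j$-weighted sum, and finish with the Bolley--Guillin--Villani concentration bound from \cite{bolley2007quantitative}. If anything, you are more explicit than the paper about the data-dependence of $f$ in the target term and about the union-bound bookkeeping (the paper simply cites the concentration result and absorbs the constants into $c'$).
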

\begin{proof}
%

By the triangle inequality we have that
\begin{align*}
W_D\left(\sum_{j=1}^J \alpha_j p_{S,j}, p_T^f\right)&\leq W_D\left(\sum_{j=1}^J \alpha_j \hat p_{S,j}, \hat p_T^f\right)+W_D(\hat p_T^f, p_T^f)+ W_D\left(\sum_{j=1}^J \alpha_j \hat p_{S,j}, \sum_{j=1}^J \alpha_j  p_{S,j}\right) \\
&\leq W_D\left(\sum_{j=1}^J \alpha_j \hat p_j, \hat p_T^f\right)+W_D(\hat p_T^f, p_T^f)+ \sum_{j=1}^J \alpha_j W_D\left( \hat p_{S,j}, p_{S,j}\right) \\
\end{align*}
where the last inequality follows from Lemma \ref{lem:wass}.
Using the well known convergence property of the Wasserstein distance proven in \cite{bolley2007quantitative} we find the following bound with probability $1-\eta$ 
\begin{equation}\small
\varepsilon_{p_T}(f) \leq W_D\left(\sum_{j=1}^J \alpha_j \hat p_{S,j}, \hat p_T^f\right)+\sqrt{\frac{2}{c'}\log\left(\frac{2}{\eta}\right)}\left(\frac{1}{N_{T}}+\sum_{j=1}^J\frac{\alpha_j}{ N_j}\right)
+ 
\Lambda(p_S^{\boldsymbol{\alpha}},p_T) + 2kM \phi(\lambda)
\end{equation}
with $c'$ corresponding to all \textit{source} and \textit{target}  distributions under similar conditions as in \cite{Courty2017}.
\end{proof}

}

\section{The algorithm}\label{app:B}
We recall here the algorithm we proposed to solve the MSDA-WJDOT problem (Algorithm \ref{alg:wjdot}). $P_{\Delta ^{J}}$ is the projection to the simplex $\Delta ^{J} = \{\pmb\alpha\in\mathbb{R}^{J}| \sum _{j=1}^{J}\alpha _{j}=1, \alpha _{j}\geq 0\}$ defined as 
\begin{equation}
    P_{\Delta ^{J}}(\pmb w) =\operatorname*{argmin}_{\pmb\alpha\in\Delta^{J}} \Vert \pmb{w} - \pmb\alpha \Vert.
\end{equation}

We implemented it by using Algorithm \ref{alg:projection}, firstly proposed in \cite{held1974}.

\begin{algorithm}[h]
\caption{Optimization for MSDA-WJDOT\label{alg:wjdot}}
\begin{algorithmic}
\STATE Initialise $\pmb{\alpha}=\frac{1}{J}\mathbf{1}_J$ and $\pmb{\theta}$ parameters of $f_{\pmb{\theta}}$ and steps $\mu_{\pmb{\alpha}}$ and $\mu_{\pmb{\theta}}$.
\REPEAT 
\STATE $\pmb{\theta}\leftarrow \pmb{\theta}-\mu_{\pmb{\theta}}\nabla_{\pmb{\theta}} W_D\Big(\hat p_T^{f}, \sum _{j=1}^{J} \alpha _{j} \hat p_{S,j}\Big)$
\STATE $\pmb{\alpha}\leftarrow P_{\Delta^J}\Big(\pmb{\alpha}-\mu_{\pmb{\alpha}}\nabla_{\pmb{\alpha}} W_D(\hat p_T^{f}, \sum _{j=1}^{J} \alpha _{j}\hat p_{S,j}) \Big)$
\UNTIL{Convergence}
\end{algorithmic}
\end{algorithm}

\begin{algorithm}[h]
\caption{Projection to the simplex \citep{held1974}\label{alg:projection}}
\begin{algorithmic}
\STATE Sort $\pmb{w}$ into $\pmb{u}$: $u_{1} \geq \cdots \geq u_{J}$.
\STATE Set $K:=\operatorname*{max}_{1\leq k \leq J} \{k|(\sum _{j=1}^{k} u_{j}-1/k < u_{k}\}.$
\STATE Set $\tau := (\sum _{j=1}^{K} u_{j} -1)/K.$
\STATE For $j=1,\ldots, J$ set $\alpha _{j}:=\max\{w_{j}-\tau, 0\}.$
\end{algorithmic}
\end{algorithm}

\section{Numerical experiments}\label{app:C}

\subsection{Simulated data}
\paragraph{Domain shift}
We generate a data set $(X_{0}, Y_{0})$ 
by drawing $X_{0}$ from a 3-dimensional Gaussian distribution with 3 cluster centers and standard deviation $\sigma=0.8$. 
We keep the same number of examples for each cluster. To simulate the $J$ \textit{sources}, we apply $J$ rotations to the 
input data $X_{0}$ around the $x$-axis. More precisely, we draw $J$ equispaced angles $\theta _{j}$ from $[0, \frac{3}{2}\pi]$
and we get $X_{j}=\{\textbf{x}_{j}^{i}\}$ as
\begin{equation}
{\textbf{x}_{j}^{i}}^{\top} = {\textbf{x}_{0}^{i}}^{\top} \cdot 
\begin{bmatrix}
1 & 0 & 0\\
0 & cos(\theta _{j}) & -sin(\theta _{j}) \\
0 & sin(\theta _{j}) & cos(\theta _{j})
\end{bmatrix}.
\end{equation}
To generate the \textit{target} domain $X_{T}$, we follow the same procedure by randomly choosing an angle $\theta _{T}\in[0, \frac{3}{2}\pi]$.  
We keep the label set fixed, i.e. $Y_{j}=Y_{T}=Y_{0}$. Note that in this case the embedding function $g$ is the identity function and, hence, $\mathcal{X}\equiv \mathcal{G}$.
In the following we report all the experiment we carried out on the simulated data, in which we also investigate to replace the exact Wasserstein distance by the the Bures-Wasserstein distance 
 \begin{equation}
     BW(\mu_S,\mu_T)^2=\|\mathbf{m}_S-\mathbf{m}_T\|^2+ \text{Trace}\left(\Sigma_S+\Sigma_T-2\left(\Sigma_S^{1/2}\Sigma_T\Sigma_S^{1/2}\right)^{1/2}\right),
 \end{equation}
 where the $\mathbf{m}_S,\Sigma_S$ are respectively the first and second order moments of distribution $\mu_S$ (and similarly for  $\mathbf{m}_T,\Sigma_T$). The BW distance has the advantage of having a complexity linear in the number of samples that can scale better to large dataset. 
 We label this method variant with $(B)$, while we refer to the exact OT as $(E)$.\\ 
 
In the following, we investigate the performance of MSDA-WJDOT at varying of the number of \textit{sources} $J$, \textit{source} samples $N_{j}$, and \textit{target} samples $N_{T}$. We compare the proposed approch with other MSDA methods and with the \texttt{Baseline}, \texttt{Target}, \texttt{Bayes} classification.
\begin{itemize} 
\item \textit{Varying the number of \textit{sources}:} we keep the number of samples fixed in both \textit{sources} and \textit{target} datasets (s.t. $N_{j}=N_{T}$ $\forall j$) and we vary the number of \textit{sources} $J\in \{3, 5, 10, 20, 25, 30\}$. In Fig. \ref{fig:simu_exp1} we report the accuracy of the different methods.

\begin{figure}[h!]
\vspace{1cm}
\centering\includegraphics[width=.99\linewidth]{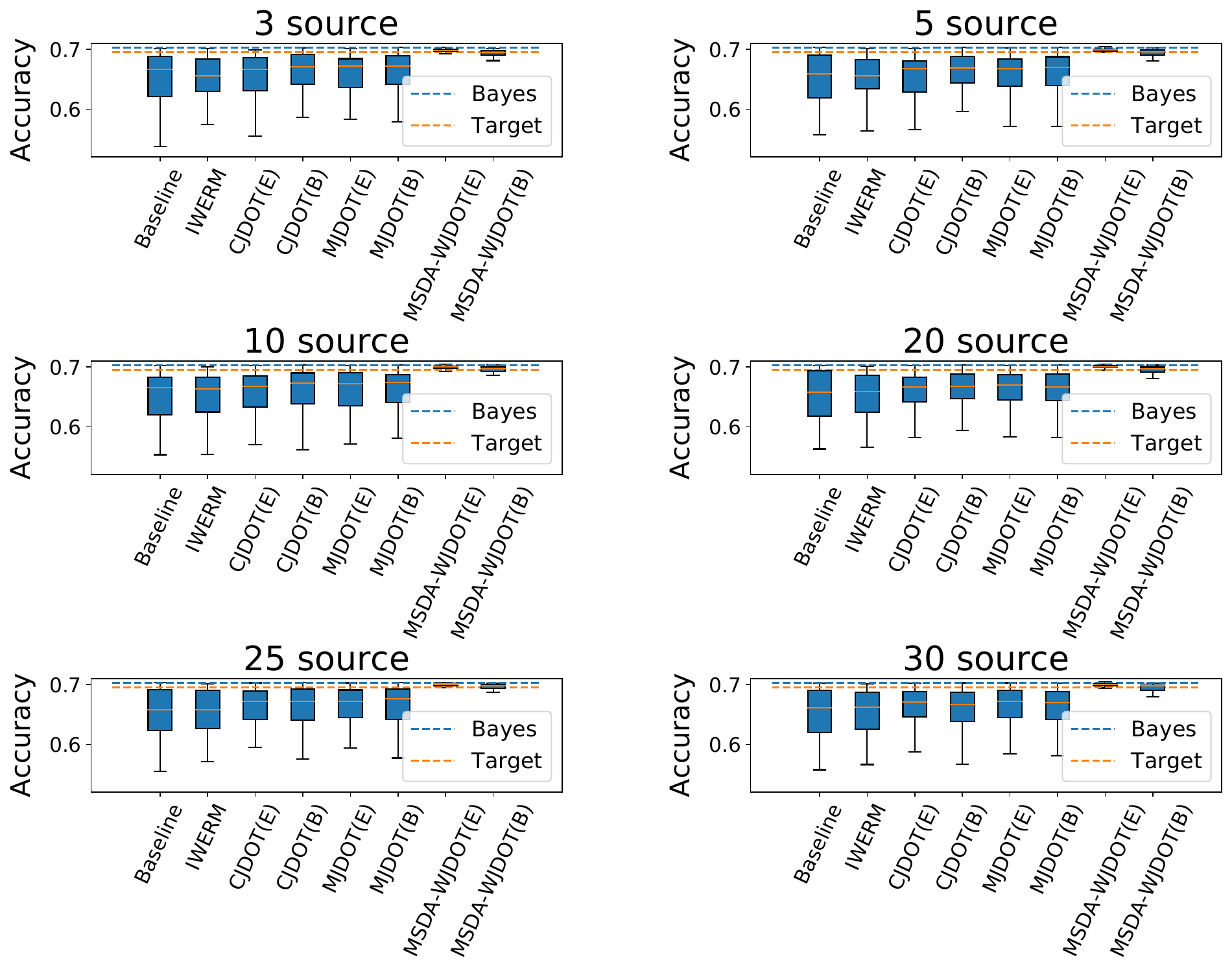}
\caption{Methods' accuracy for varying the number of \textit{sources} $J$.}
\label{fig:simu_exp1}
\vspace{2cm}
\centering\includegraphics[width=.99\linewidth]{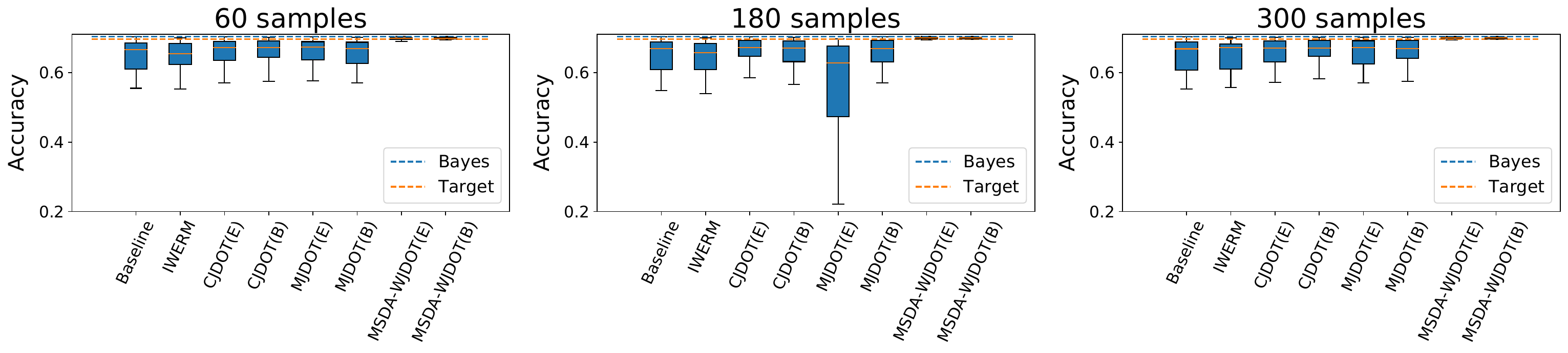}
\caption{Methods' accuracy for varying the number of \textit{source} samples.}
\label{fig:simu_exp5}
\vspace{1cm}

\end{figure}

\begin{figure}[h!]

\centering\includegraphics[width=.99\linewidth]{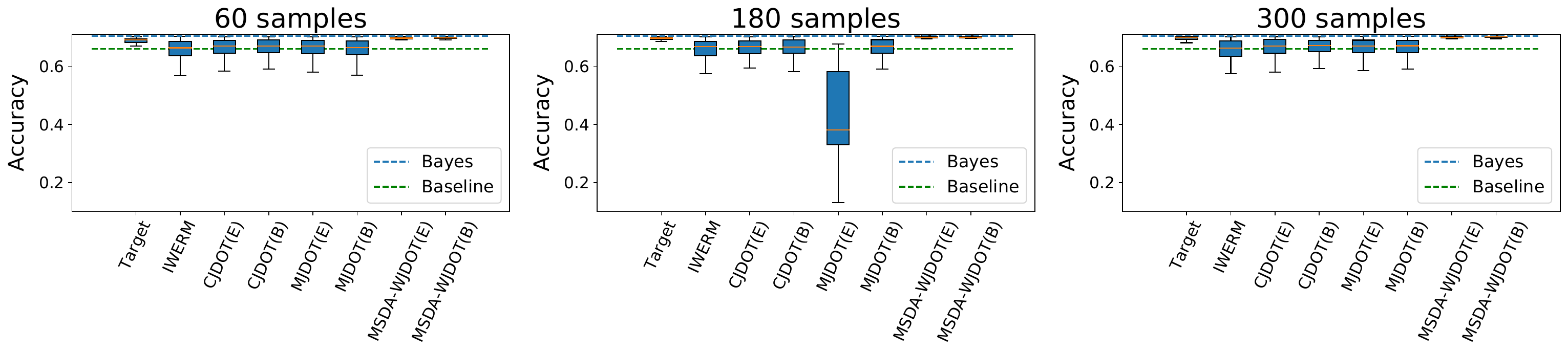}
\caption{Methods' accuracy for varying the number of \textit{target} samples}
\label{fig:simu_exp2}
\vspace{1cm}
\centering\includegraphics[width=.99\linewidth]{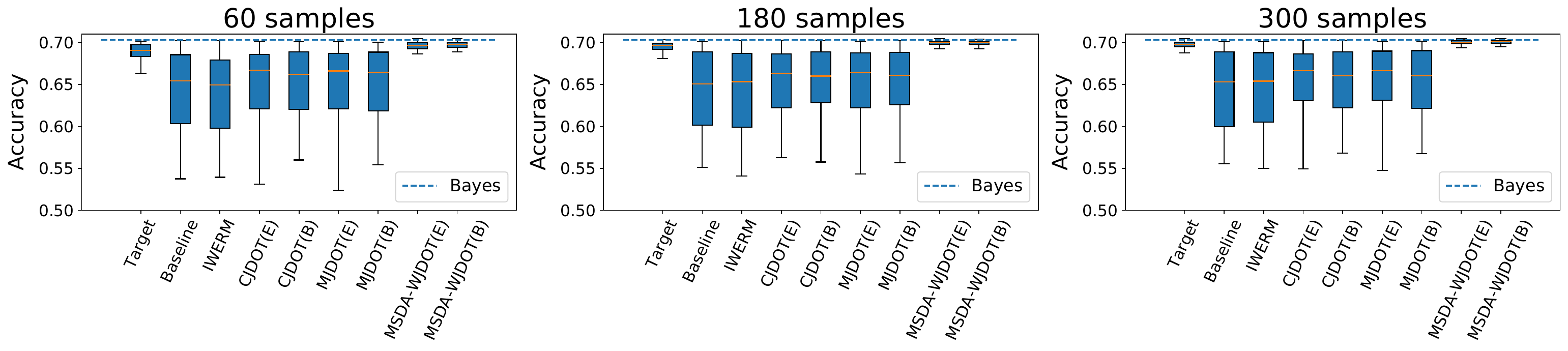}
\caption{Methods' accuracy for varying the number of \textit{source} and \textit{target} samples}
\vspace{1cm}
\end{figure}

\begin{figure}[!h]
\centering\includegraphics[scale=.35]{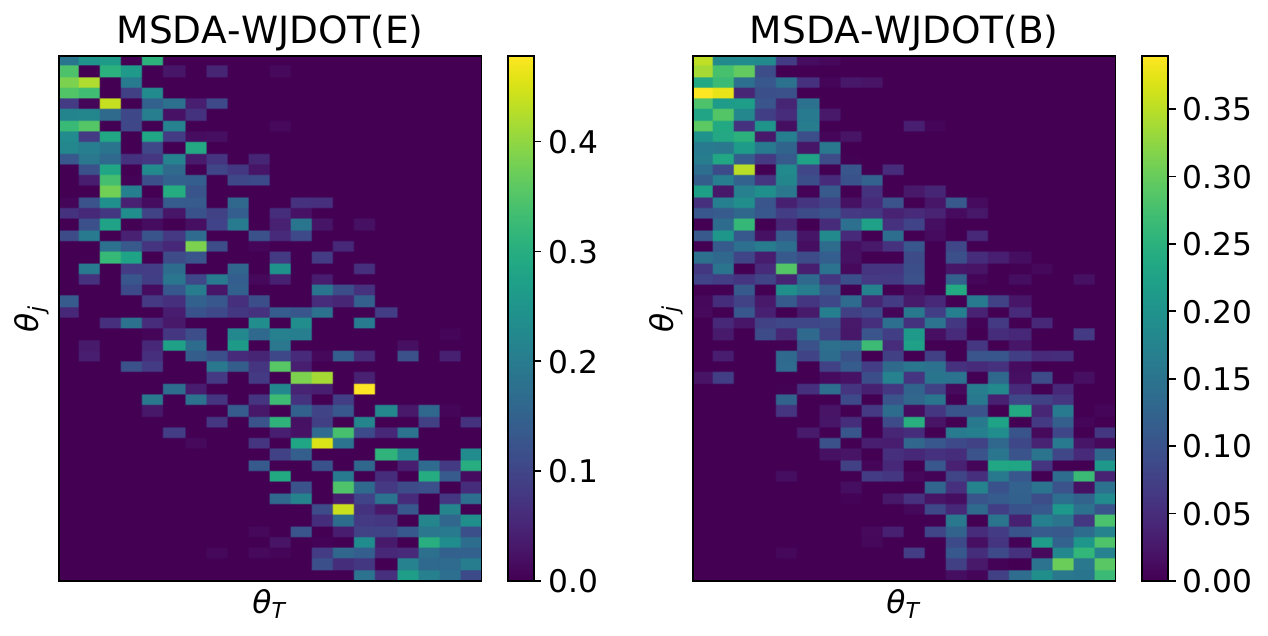}
\caption{Recovered $\pmb{\alpha}$ with small sample size ($N_{j}=N_{T}=60$).}
\label{alpha_60examples}
\vspace{1cm}
\centering\includegraphics[scale=0.35]{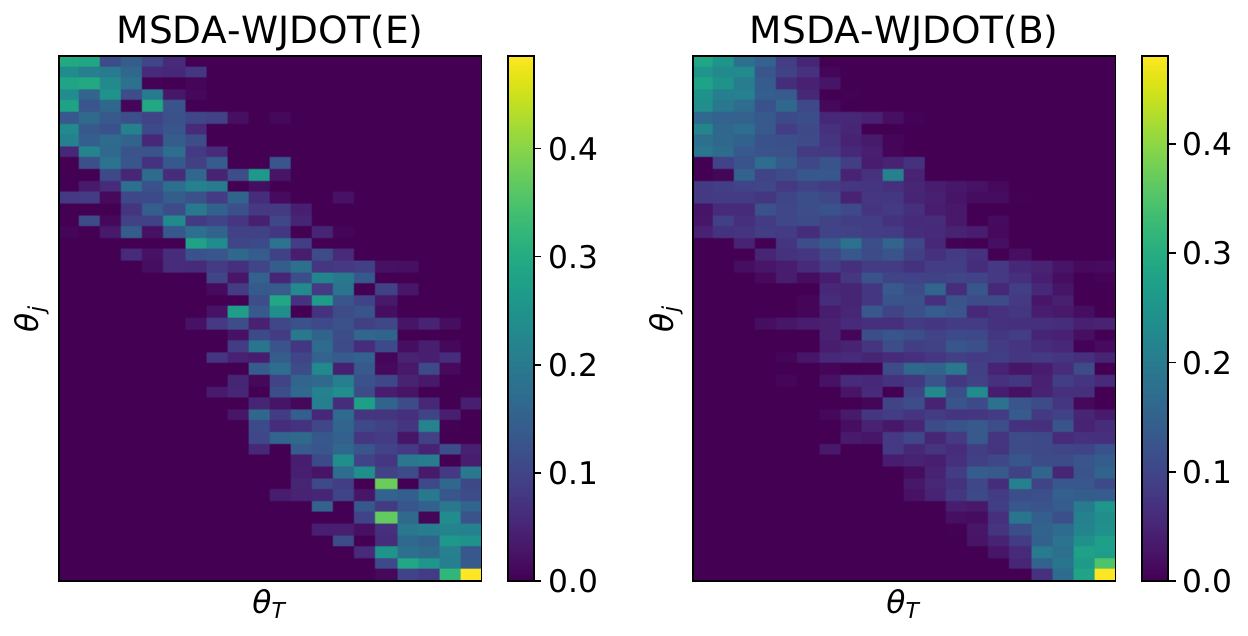}
\caption{Recovered $\pmb{\alpha}$ for $N_{j}=N_{T}=300$.}
\label{fig:simu_exp5_alpha}
\end{figure}

\item \textit{Varying the number of \textit{source} samples:}
we fix the number of \textit{sources} $J$ equal to 20 and the number of \textit{target} samples $N_{T}$ to 300. Fig \ref{fig:simu_exp5} and \ref{fig:simu_exp5_alpha} show the methods accuracy for varying the number of \textit{source} samples $N_{j}$ in $\{60, 180, 300\}$ and the recovered $\pmb{\alpha}$ weight for $N_{j}=300$, respectively. 

\item \textit{Varying the number of the \textit{target}  samples:}
we fix $J=20$ and $N_{j}=300$, with $1 \leq j \leq J$. We let vary the number of \textit{target}  samples $N_{T}$ in $\{60, 180, 300\}$ (Fig. \ref{fig:simu_exp2}).

\item \textit{Varying the number of samples of all domains}:
we fix the number of \textit{sources} equal to 20. We let vary the number of \textit{source} and \textit{target}  samples in $\{60, 180, 300\}$, by keeping $N_{j}=N_{T}$ with $1 \leq j \leq J$.
We report the methods' accuracy in Fig. \ref{fig:simu_exp5}.

\end{itemize}

In all experiments \texttt{MSDA-WJDOT} significantly outperfoms \texttt{CJDOT}, \texttt{MJDOT}, \texttt{IWERM} and the \texttt{Baseline}.  Both \texttt{MSDA-WJDOT(E)} and \texttt{MSDA-WJDOT(B)} provide a better or at least comparable performance w.r.t. the \texttt{Target} method, in which the labels of the \textit{target} dataset are used. In Fig. \ref{alpha_60examples} and \ref{fig:simu_exp5_alpha} we show the recovered weights $\pmb\alpha$ for $N_{j}=N_{T}=60$ and $N_{j}=N_{T}=300$, respectively. In both cases, the $x$- axis   reports different random \textit{target} angles in the $[0, \frac{3}{2}\pi]$ interval (ordered by increasing angles), whereas the $y$-axis represents the \textit{source} angles ordered such that $\theta _{j}\leq \theta _{j+1}$, $1\leq j \leq J-1$. As we can see, the weights are higher along the diagonal meaning that \texttt{MSDA-WJDOT} always rewards the \textit{sources} with angle closest to $\theta _{T}$.


\begin{figure}
    \centering
    ~\hfill
    \includegraphics[width=7cm]{./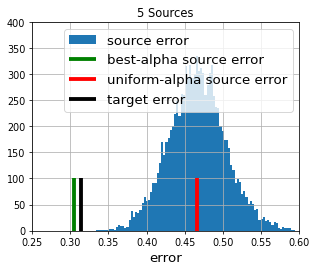}
    \includegraphics[width=7cm]{./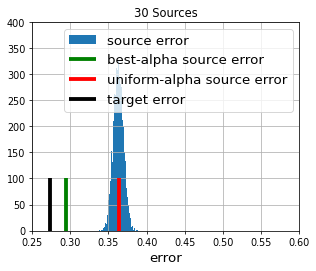}\hfill~
    \caption{Examples of source error and target error when the function $f$ is the function learned by our approach (instead of the one minimizing $\Lambda$ in \eqref{eq:lambda}). The blue curve represents an histogram of the $\alpha$-weighted source error for $10000$ random $\alpha$.
     The $x$-axis represents the value of the error and the  $y$-axis the count.  
    The green line corresponds to the source error for the learned $\alpha$, red one gives the error for an uniform alpha and the black one represents the target error (the height of the lines has been arbitrarily set for a sake of clarity).
    We can see that for both $5$ (Left) and $30$ sources (Right) the learned $alpha$ leads to lower source error even though $\alpha$ has been optimized for aligning joint distributions. }
    \label{fig:lambda}
\end{figure}

\subsection{Real data}
In the section, we introduce a new strategy for the validation, in alternative to the one based on SSE proposed in Sec. 3.2. We propose to employ the accuracy of the learned classifier $f$ on the \textit{source} datasets and weighted by $\pmb{\alpha}$, i.e.
\begin{equation}\label{eq:acc}
\sum _{j=1}^{J} \alpha _{j} ACC_{S,j}(f),
\end{equation}
with $ACC_{S,j}(f) = \frac{\# \{f(x_{j}^{i}) =y_{j}^{i}\}}{N_{j}}$.
To refer to this approach, we denote as \texttt{MSDA-WJDOT}$^{acc}$, \texttt{CJDOT}$^{acc}$, \texttt{MJDOT}$^{acc}$ the MSDA-WJDOT and the two JDOT extensions respectively. Let us remark that \texttt{MSDA-WJDOT}$^{acc}$ is a way to reuse the weights $\pmb{\alpha}$ that provide the closest \textit{source} distributions which, hence, are supposed to give a better estimate of the performance of the current classifier.

\paragraph{Object recognition}

In Table \ref{tab:alpha_CO} we report the \textit{source} weights provided by MSDA-WJDOT. In all cases, $\pmb{\alpha}$ is a one-hot vector suggesting that only one \textit{source} is meaningfully related to the \textit{target} domain. This is in line with the results on single-source DA found in \cite{Courty2015} in which the \textit{source} domain providing the highest accuracy corresponds to the one selected by \texttt{MSDA-WJDOT}.



\begin{table}[!ht]
\begin{center}\small
\begin{tabular}{c||c|c|c|c}  
\hline
\textbf{Target} & \textit{Amazon} & \textit{dslr} & \textit{webcam} & \textit{Caltech10}\\
\hline
 \textbf{Amazon}& - & 0 & 0 & 1 \\
 \hline
 \textbf{dslr} & 0 &-&1&0\\
 \hline
\textbf{webcam} & 0 &1&-&0\\
\hline
 \textbf{Caltech10} & 1&0&0&-\\
\hline
\end{tabular}
\end{center}
\caption{$\pmb{\alpha}$ weights}
\label{tab:alpha_CO}
\end{table}

 Table \ref{tab:CaltechOffice2} is a full version of Table 1 in the paper, in which we also show the accuracy obtained by employing the validation strategy introduced in Eq. \ref{eq:acc}. We can observe that \texttt{MSDA-WJDOT}$^{acc}$ provides good performances, comparable with both \texttt{MSDA-WJDOT} and the other MSDA methods, but \texttt{MSDA-WJDOT} still remains the state of the art.

\begin{table}[!h]
\begin{center}\small
\begin{tabular}{c||c|c|c|c|c}  
\hline
\textbf{Method} & \textbf{Amazon} & \textbf{dslr} & \textbf{webcam} & \textbf{Caltech10} & \textbf{AR}\\
\hline
\texttt{Baseline} & $93.13 \pm 0.07 $& $94.12 \pm 0.00$ & $89.33 \pm 1.63$ & $82.65 \pm 1.84$ & 6.75 \\
\hline
\texttt{IWERM} \cite{Sugiyama2007} & $93.30 \pm 0.75$ & $\pmb{100.00 \pm 0.00}$ &$89.33 \pm 1.16$ & $\pmb{91.19 \pm 2.57} $& 3.25\\ 
\texttt{CJDOT}$^{acc}$ \cite{Courty2017}&  $92.27 \pm 0.83$ & $97.06 \pm 2.94$  & $90.33 \pm 2.33$ & $86.19 \pm 0.09$ &  4.50 \\
\texttt{CJDOT} \cite{Courty2017}&  $93.74 \pm 1.57$ & $93.53\pm4.59$  & $90.33 \pm 2.13$ & $85.84 \pm 1.73$ & 4.50\\
\texttt{MJDOT}$^{acc}$ \cite{Courty2017} & $93.61 \pm 0.04$& $ 98.82\pm 2.35 $ & ${91.00 \pm 1.53 }$ & $ 85.22\pm 1.48$ & 3.75  \\
\texttt{MJDOT} \cite{Courty2017}& $94.12 \pm 1.57$& $97.65\pm  2.88$ & $90.27 \pm 2.48$ & $84.72\pm 1.73$ & 4.50  \\
\texttt{JCPOT}$^*$ \cite{Redko2019} & $79.23\pm 3.09 $& $81.77\pm 2.81 $&$93.93\pm 0.60$&$77.91 \pm 0.45 $& {7.25}\\
\texttt{WBT}$^*$ \cite{montesuma2021}& $59.86\pm2.48 $& $60.99\pm2.15 $&$64.13\pm 2.38$&$62.80 \pm 1.61 $&{9.50}\\
\texttt{WBT$_{reg}^*$} \cite{montesuma2021}& $92.74\pm 0.45 $& $95.87\pm 1.43 $&$ \pmb{96.57\pm 1.76}$&$ 85.01\pm 0.84 $&{5.00}\\
\hline
\texttt{MSDA-WJDOT}$^{acc}$ & $ 93.61 \pm 0.09$ & $\pmb{100.00\pm 0.00}$ & $ 86.00 \pm  2.91$ & $ 85.49 \pm 1.69$ & 4.25 \\
\texttt{MSDA-WJDOT} & $\pmb{94.23\pm 0.90}$ & $\pmb{100.00\pm 0.00}$ & $89.33\pm 2.91$ & $85.93\pm 2.07$ &  \textbf{2.75} \\
\hline
\texttt{Target}  & $95.77 \pm 0.31$& $88.35\pm 2.76$ & $99.87\pm 0.65$  & $89.75\pm 0.85$ & - \\
\texttt{Baseline+Target}  & $94.78 \pm 0.48$ &  $99.88\pm 0.82$ & $100.00\pm 0.00$ & $91.89\pm 0.69$ & -\\
\hline
\end{tabular}
\end{center}
\caption{Accuracy on Caltech Office Dataset. Results of methods marked by $^*$ are from \cite{montesuma2021}.}
\label{tab:CaltechOffice2}
\end{table}

\paragraph{Music-speech discrimination}
\begin{figure}[!h]
\centering\includegraphics[scale=0.5]{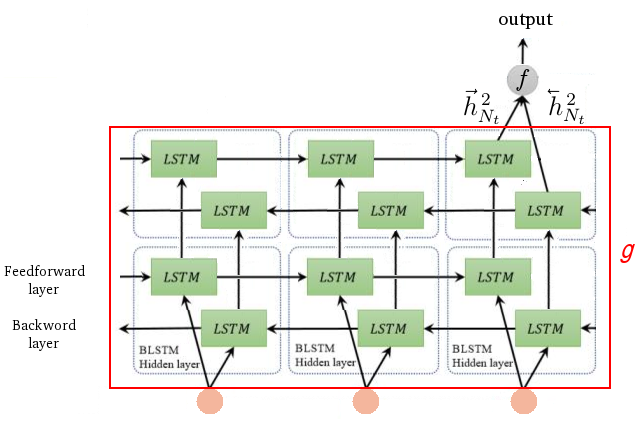}
\caption{BLSTM architecture. A similar architecture is used for the multi-task learning approach: we use the same embedding function $g$ and $J$ classification functions $f_{j}$.}
\label{fig:BLSTM}
\end{figure}

The model we adopted is shown in Fig. \ref{fig:BLSTM}, where $g$ is a two-layers Bidirectional Long Short-Term Memory (BLSTM) that feeds the one feed-forward layer $f$ with the last hidden state. Weights were initialized with Xavier initialization. Training is performed with Adam optimizer with 0.9 momentum and $\epsilon = e^{-8}$. Learning rate exponentially decays every epoch. We grid-research the initial learning rate value and the decay rate.

In Table \ref{tab:MSdiscrimination2} we show the MSDA performances in the music-speech discrimination. In particular, for MSDA-WJDOT and JDOT variants the validation strategy described in formula \ref{eq:acc} has been employed. Results show that, although this is a valid strategy, early stopping based on SSE described in Sec. 4 always outperforms. The Average Rank shows that MSDA-WJDOT is state of the art in music-speech discrimination. 

\begin{table}[!h]
\begin{center}\small
\begin{tabular}{l||c|c|c|c|c}  
\hline
\textbf{Method} & \textbf{F16} & \textbf{Buccaneer2} & \textbf{Factory2} & \textbf{Destroyerengine} & \textbf{AR}\\
\hline
\texttt{Baseline} & $69.67 \pm 8.78$ & $57.33 \pm 7.57$ & $83.33 \pm 9.13$ & $87.33\pm 6.72$ & 11.25  \\
\hline
\texttt{IWERM} \cite{Sugiyama2007} &$72.22\pm 3.93$ & $58.33 \pm 5.89$& $85.00 \pm 6.23$ & $81.64 \pm 3.33$ &  10.75\\
\texttt{IWERM}$_{mtl}$ \cite{Sugiyama2007} & $75.00\pm 0.00$ & $66.67 \pm 0.00$ & \pmb{$ 100.00 \pm 0.00$}&  $98.33
\pm 3.33 $ &  5.50\\
\texttt{DCTN} \cite{Xu2018}& $66.67\pm 3.61$ & $68.75 \pm 3.61$ & $87.50 \pm 12.5$ & $94.44 \pm 7.86$ & 8.50 \\ 
\texttt{M}$^{\pmb{3}}$\texttt{SDA} \cite{Peng2018}& $70.00 \pm 4.08$ & $61.67 \pm 4.08$& $85.00 \pm 11.05$ & $83.33 \pm 0.00 $ & 10.25 \\
\texttt{CJDOT} \cite{Courty2017}& $ 59.50 \pm 13.95 $ & $ 50.00\pm 0.00$ & $ 83.33 \pm 0.00 $ & $91.67 \pm 0.00 $ &  11.50 \\
\texttt{CJDOT}$_{mtl}$ \cite{Courty2017}& $ 83.83 \pm 5.11 $ & $ 74.83 \pm 1.17$ & $ \pmb{100.00 \pm 0.00}$ & $ 95.74 \pm 16.92 $ & 4.00 \\
\texttt{CJDOT}$^{acc}_{mtl}$ \cite{Courty2017}& $79.83 \pm 4.74$ & $74.83 \pm 1.17$ & $99.67 \pm 1.63$ & $\pmb{100.00\pm 0.00}$ &  3.50 \\
\texttt{MJDOT}\cite{Courty2017}& $ 66.33  \pm 9.57 $ & $ 50.00\pm 0.00$ & $ 83.33 \pm 0.00 $ & $91.67 \pm 0.00 $&  11.50 \\
\texttt{MJDOT}$_{mtl}$\cite{Courty2017}& $ 86.00 \pm 4.55 $ & $ 72.83\pm 5.73$ & $97.67 \pm 3.74$ & $97.74 \pm 8.28$ &  4.00 \\
\texttt{MJDOT}$^{acc}_{mtl}$\cite{Courty2017} & $77.67 \pm 5.12$ & $69.00 \pm 4.72$ & $99.67 \pm 1.63$ & $99.83 \pm 1.17$ & 4.75 \\
\texttt{JCPOT}$^*$\cite{Redko2019} & $79.23\pm 3.09 $& $81.77\pm 2.81 $&$93.93\pm 0.60$&$77.91 \pm 0.45 $& {7.50}\\
\texttt{WBT}$^*$\cite{montesuma2021} & $59.86\pm2.48 $& $60.99\pm2.15 $&$64.13\pm 2.38$&$62.80 \pm 1.61 $&{13.00}\\
\texttt{WBT$_{reg}^*$}\cite{montesuma2021} & $ \pmb{92.74\pm 0.45} $& $\pmb{95.87\pm 1.43} $&${96.57\pm 1.76}$&$ 85.01\pm 0.84 $&{4.25}\\
\hline
\texttt{MSDA-WJDOT} & $83.33 \pm 0.00$ & $58.33 \pm 6.01$ & $87.00\pm 6.05$ & $89.00 \pm 4.84$& 8.00\\
\texttt{MSDA-WJDOT}$_{mtl}$& ${87.17 \pm 4.15} $& $74.83\pm 1.20 $& $99.67 \pm 1.63$ & $99.67 \pm 1.63 $& \textbf{2.75} \\
\texttt{MSDA-WJDOT}$^{acc}_{mtl}$ &$ 83.00 \pm 4.07$ & $75.00 \pm 0.00 $& $\pmb{100.00 \pm 0.00}$ & $98.83 \pm 3.34 $& 3.50\\
\texttt{MSDA-WJDOT}$^{acc}$ & $83.33 \pm 0.00$ & $58.33 \pm 6.01$ & $87.00\pm 6.05$ & $89.00 \pm 4.84$& 8.00\\
\hline
\texttt{Target} & $ 73.67 \pm 6.09
$  & $69.17\pm 7.50$ & $77.33 \pm 4.73$  & $73.17\pm 9.90$ & - \\
\texttt{Baseline+Target} & $71.06\pm 9.31 $ & $67.62\pm 11.92$ & $85.33 \pm 11.85$ & $79.53 \pm 10.05$ & - \\
\hline
\end{tabular}
\end{center}
\caption{Accuracy on Music-Speech Dataset. Results of methods marked by $^*$ are from \cite{montesuma2021}.}
\label{tab:MSdiscrimination2}
\end{table}

\end{document}